\newtheorem{theorem}{Theorem}
\numberwithin{theorem}{section}
\newtheorem{proposition}[theorem]{Proposition}
\newtheorem{lemma}[theorem]{Lemma}
\theoremstyle{definition}
\theoremstyle{remark}
\newtheorem{remark}[theorem]{Remark}
\newtheorem{example}[theorem]{Example}
\NewDocumentCommand{\gnp}{O{n} O{p}}{{\text {\bf G}}(#1, #2)}
\NewDocumentCommand{\xnp}{O{n} O{p}}{{\text {\bf X}}(#1, #2)}
\NewDocumentCommand{\multixnp}{O{n} O{p}}{{\text {\bf X}}(#1, \mathbf{#2})}
\NewDocumentCommand{\multixnpnotbold}{O{n} O{p}}{{\text {\bf X}}(#1, #2)}
\newcommand{\gr}[1]{\textcolor{black}{{{}}#1}{}}
\newcommand{\R}{{{\mathbb R}}} 
\newcommand{\E}{{{\mathbb E}}}
\renewcommand{\P}{{{\mathbb P}}}
\newcommand{\II}{{{\mathbb I}}}
\newcommand{\ns}{N(\mathcal{\tilde{S}})}
\newcommand{\qtilde}{\tilde{q}}
\newcommand{\Ftilde}{\tilde{F}}
\title{Split Conformal Prediction under Data Contamination}
\author[1]{
Jase Clarkson}
\author[2]{Wenkai Xu}
\author[3,1]{Mihai Cucuringu} \author[4]{ Yvik Swan} \author[1] {Gesine Reinert}
\affil[1]{Department of Statistics, University of Oxford, OX1 3LB, United Kingdom}
\affil[2]{
Department of Statistics, University of Warwick, CV4 7AL, United Kingdom}\affil[3]{Department of Mathematics, UCLA, 
520 Portola Plaza, Los Angeles, CA 90095, USA}
\affil[4]
{Department of Mathematics, Université libre de Bruxelles, Boulevard du Triomphe B-1050,
Belgium
}
\begin{document}
\maketitle

{\bf Keywords: }{Conformal Prediction, Distribution-free Inference, Data 
{Contamination}
}


\begin{abstract}

Conformal prediction is a nonparametric technique for constructing prediction intervals or sets from arbitrary predictive models.  
It is popular because it comes with theoretical guarantees on the marginal coverage of the prediction sets when the data are 
exchangeable, and the split conformal prediction variant has low computational cost compared to model training.  We study the robustness of split conformal prediction when the data are contaminated: a fraction of the calibration scores are outliers, drawn from a  different distribution than the bulk. We quantify the impact of the corrupted data on the coverage and efficiency of the constructed sets when evaluated on ``clean'' test points, and confirm our findings  with numerical experiments. 
In the classification setting,  we propose an adjustment  which we call Contamination Robust Conformal Prediction, and verify the efficacy of our approach using both synthetic and real datasets.

\end{abstract}


\section{Introduction}
 Conformal prediction \cite{gammerman1998learning, vgs} 
 {has become a powerful tool} for uncertainty quantification, 
 {with prominent applications in machine learning algorithms for regression and classification; see for example \cite{balasubramanian2014conformal} and \cite{angelopoulos2022gentle}. Conformal prediction is a family of algorithms that can be used to generate finite } 


\maketitle

\noindent
 sample valid prediction intervals or sets from a black-box machine learning model. 
The possibly most used variant, 
\textit{split-conformal} (explained in more detail in Section \ref{sec:split_cp}), also requires trivial computational overhead when compared to model fitting. 





One aspect that 
{has only recently} received 
attention 
is applying conformal prediction to data containing outliers. Although conformal prediction yields intervals with finite sample guarantees when the model is correctly specified, 
if outliers are not corrected, the conformal prediction intervals may not provide the coverage that the user expects.  
This paper addresses this issue. Our setting is that of a Huber-type mixture model (\cite{huber1964robust,huber1965robust}) of independent observations which are assumed to come from a distribution $\pi_1$, 
but are actually contaminated by a small number of observations which have distribution $\pi_2$. An observation from $\pi_1$ is called {\it clean}. A typical research question is then to predict the clean response,
see for example \cite{chen2022online}. {This setting was first considered in \cite{barber2023conformal}; in \cite{sesia2023adaptive}, a more comprehensive treatment is given for label-conditional coverage and for marginal coverage. \cite{penso2025conformal} consider the particular setting of classification with uniform label noise.} 

Intuitively, if one knew which component of the mixture each data point was sampled from, one could simply calibrate the prediction set using only samples from the same mixture as the test point as these data points are exchangeable. We instead assume that it is unknown which data points are clean, and study the impact of contamination on the coverage and size
of the constructed sets.
In particular, we provide lower and upper bounds on the coverage under the assumption that the new data point is clean. {These  coverage bounds are designed to tend to 0 when the contamination proportion tends to 0.}
We provide a general robustness result regarding the construction of conformal prediction sets. Under the additional assumption that the contaminating distribution stochastically dominates the clean distribution, as may be the case in a regression setting, we derive an over-coverage guarantee, and we give a companion result for the situation that the clean distribution stochastically dominates the contaminating distribution.

Moreover, in a classification setting, we devise {\it Contamination Robust Conformal Prediction}, abbreviated CRCP, 
adjusting the prediction sets to obtain improved coverage guarantees. 
Both on synthetic data and on a benchmark data containing real-world label noise, namely the CIFAR-10N data set by \cite{wei2022learning}, we find that while standard conformal prediction can give considerable over-coverage, CRCP not only ameliorates this issue but also produces considerably narrower prediction intervals. {Using a conference version of CRCP, \cite{penso2025conformal} draws a similar conclusion on other data sets.}

{Our work adds the following novelties to the literature.
\begin{itemize}[noitemsep, topsep=3pt,leftmargin=*]
    \item Our theoretical results recover the non-contaminated behaviour as   the contamination proportion tends to 0.
    \item {Compared to \cite{sesia2023adaptive}, our theoretical bounds are tighter in that they do not require lower bounds on the number of samples from each label. Relative to \cite{penso2025conformal}, our method applies in a more general setting, as their guarantees are restricted to discrete uniform label noise.}

    \item We employ our theoretical bounds to introduce {\it Contamination Robust Conformal Prediction (CRCP).} This method is based on tighter bounds and shows better performance than the Adaptive Conformal Classification with Noisy Label (ACNL) method from \cite{sesia2023adaptive} and it applies to a wider setting {than} the Noise-Aware Conformal Prediction (NACP) method from \cite{penso2025conformal}, for which the theoretical guarantees require discrete uniform noise. 
 \item Instead of relying solely on the total variation distance, in which many guarantees in the literature are stated, we provide bounds in terms of the Kolmogorov--Smirnov distance 
 and the squared Le Cam distance. {These metrics are often easier to estimate in practice and allow us to extend the range of theoretical guarantees available for contaminated conformal prediction.}
\end{itemize}}
  
The paper is structured as follows. Section \ref{sec:split_cp} gives background on split conformal prediction when  the calibration and the test data are exchangeable. In Section \ref{sec:cp_mixture}, split conformal prediction under data contamination is introduced; we give 
theoretical results for coverage and robustness, and  refined results under stochastic dominance assumptions. In Section~\ref{sec:remedy}, CRCP as a remedy for adjusting over-coverage in a classification setting is 
{devised}.
Experiments are shown in Section \ref{sec:experiments}, with Subsection \ref{subsec:synthadata} illustrating CRCP on synthetic datasets and Subsection \ref{subsec:realdata} comparing CRCP and standard conformal prediction on the CIFAR-10N dataset from \cite{wei2022learning}.
A detailed 
discussion of related results 
is postponed to the concluding Section \ref{sec:discussion}.
Proofs {as well as experimental details} are found in the Appendix. 
The code for  
this paper is available at 
\url{https://github.com/reinertdaniel/cp_under_data_contamination_refactor.git}.

\section{Split Conformal Prediction for Exchangeable Data} \label{sec:split_cp}
Here we lay out notation and briefly describe the split conformal prediction procedure in the exchangeable setting; see \cite{angelopoulos2022gentle} for an excellent {and} extensive introduction. Suppose we have access to a pre-fitted model $\hat{f}: \mathcal{X} \rightarrow \mathcal{Y}$, and a set of \textit{calibration} data points $Z_i = (X_i, Y_i)$, $i = 1, \dots, n$ that were not used to fit the model. The goal is to construct a $(1 -\alpha)$-probability prediction set for a test {data point} $Z_{n+1} = (X_{n+1}, Y_{n+1})$, where $\alpha$ is a user-specified desired level of coverage. Conformal prediction {uses} a \textit{score function} $S: \mathcal{X} \times \mathcal{Y} \rightarrow \mathbb{R}$ which {quantifies} the agreement between the model predictions and the targets. We assume 
that the score function is \textit{negatively oriented}; a smaller score indicates a better fit. 

  The split conformal prediction procedure proceeds as follows. First, one computes the score for each calibration {data point} $S_i = S(X_i, Y_i)$. For a desired coverage of at least $1-\alpha$, the crucial step is to  estimate the prediction set boundary $\hat{q}$ as 
\begin{equation}
\label{eq:hatq}
    \hat{q} = Q_{1-\alpha} \Big( \frac{1}{n+1} \sum_{i=1}^n \delta_{S_i} + \frac{1}{n+1} \delta_{+\infty}\Big), 
\end{equation}
where for a probability measure $\mu$ on $\mathbb{R}$, $Q_{1-\alpha}(\mu) = \inf \{x: \mu ((-\infty, x]) \ge 1 - \alpha\}$; here $\delta_x$ is the delta-measure, representing {a} point mass at $x$.
{The quantity $\hat{q}$ in \eqref{eq:hatq} can be seen as an empirical quantile.} The procedure 
is equivalent to taking the $i$th order statistic of the scores $\{S_1, \ldots, S_n\}$, given by $\hat{q} = S_{(i)}= S_{i:n}$, where $i = \lceil (1-\alpha)(n+1) \rceil$ (ties are broken at random); we use the notation $S_{(i)}$ for simplicity.
Here we recall that the order statistics of $S_1, \ldots, S_n$ are $S_{(1)} \le S_{(2)} \le \cdots \le S_{(n)}$. Finally, a prediction set is constructed as 
\begin{equation} \label{eq:cp_set}
\widehat{C}_{n}\left(X_{n+1}\right)=\left\{y \in \mathcal{Y}: S\left(X_{n+1}, y\right) \leq \hat{q}\right\}.
\end{equation}
If the calibration and test data are exchangeable, 
then we have the coverage guarantee
\begin{equation}\label{eq:lower_upper}
    1-\alpha \leq \mathbb{P}\left(Y_{n+1} \in \widehat{C}_n\left(X_{ n+1 }\right)\right) \leq 1-\alpha+\frac{1}{n+1}.
\end{equation}
{The probability in \eqref{eq:lower_upper} is taken over the joint distribution of all data $(X_i, Y_i), i=1, \ldots, n+1$.} It is important to note that the coverage guarantee provided by conformal prediction is only \textit{marginal}; {for instance, if $T_{n+1}$ is a random variable that depends on $X_{n+1}$, then the guarantee in \eqref{eq:lower_upper} does not apply to} $\mathbb{P}\left(Y_{n+1} \in \widehat{C}_n\left(X_{ n+1 }\right) \mid {T}_{n+1}=x \right ).$ 

\section{Split Conformal Prediction under Data Contamination} \label{sec:cp_mixture}
\label{sec:theory} 

{A standard framework for modeling data contamination is the Huber $\varepsilon$-contamination model (\cite{huber1964robust,huber1965robust}). 
Let $\varepsilon \in [0,{1/2]}$. We assume that the calibration data are sampled i.i.d. from the mixture distribution
\begin{equation}\label{model}
    Z_i = (X_i, Y_i) \sim (1-\varepsilon)\,\pi_1 + \varepsilon\,\pi_2,
\end{equation}
where $\pi_1$ and $\pi_2$ are probability distributions on $\mathcal{X} \times \mathcal{Y}$.}
In this model,  
a (usually small) fraction $\varepsilon$ of the {data points} 
{are} outliers with a different distribution from the bulk. 
Then the induced distribution of the scores is also a mixture:
$S_i = S(X_i, Y_i) \sim \tilde{\Pi},$
where {$\tilde{\Pi} = (1-\varepsilon)\Pi_1 + \varepsilon \Pi_2$ denotes the score distribution, with cumulative distribution function (cdf)} 
$\tilde{F} = (1-\varepsilon)F_1 + \varepsilon F_2.$
Here, {$\Pi_j$ denotes the distribution of the scores $S(X_i,Y_i)$ when $(X_i,Y_i) \sim \pi_j$, and $F_j$ its corresponding cdf, for $j=1,2$.} 
{Throughout, we assume that contamination occurs independently across observations.}

We denote the quantile in 
Equation 
\eqref{eq:hatq} estimated over the corrupted calibration data as $\tilde{q}$. By de Finetti's Theorem, mixture models are exchangeable and so conformal prediction provides coverage on future test points; however this coverage is marginal, and in particular only holds for future test points sampled \textit{from the contaminated mixture distribution} so that we only have
$ \mathbb{P}(\tilde{S}_{n+1} \leq \tilde{q}) \geq 1 - \alpha
$ 
for $\tilde{S}_{n+1} \sim \Tilde{\Pi}$. Instead, 
we are interested in the setting where the test point is assumed to be ``clean'', i.e. that $S_{n+1} \sim \Pi_1$. 
{T}he coverage over future clean test points {is}
\begin{equation}\label{eq:P1} \mathbb{P}\left(Y_{n+1} \in \widehat{C}_n\left(X_{ n+1 }\right) | Z_{n+1} \sim \pi_1 \right) = \mathbb{P} \left(S_{n+1} \leq \tilde{q} | S_{n+1} \sim {\Pi_1}\right) =: \mathbb{P}_1 (S_{n+1} \leq \tilde{q} ) .\end{equation}
{Here $\mathbb{P}_1 \ne P_1$  abbreviates the probability which takes the randomness in $\tilde{q}$ into account but conditions on the observation $S_{n+1}$ being clean.}
As the {random} quantile $\tilde{q}$ was estimated using contaminated data, the coverage guarantee given in Equation 
\eqref{eq:lower_upper} no longer holds. While we focus on coverage over future clean test points, \cite{sesia2023adaptive} obtain related results for label-conditional coverage and for marginal coverage.

We 
provide bounds on the coverage obtained under the corruption model \eqref{model}. 
We %
then study the average change in prediction set size as a result of data contamination,
and 
provide some remarks on when one might expect over- or under-coverage, illustrated by a regression example. The last part of this section studies classification under label noise.

Our results employ the
following distances. 
For two distributions $Q_1$ and $Q_2$ on $\mathbb{R}$ with cdf's $G_1$ and $G_2$ the Kolmogorov-Smirnov distance between $Q_1$ and $Q_2$ is 
\begin{equation} \label{KS} 
    d_{KS}(Q_1, Q_2) = \sup_x | G_1(x) - G_2(x)|.
\end{equation}
Next we use the notation
$\pi_i (x)\, dx= dG_i(x) $, for $i=1, 2,$; if $G_1$ and $G_2$ are continuous, then $\pi_i$  is just the probability density function of $G_i$, $i=1, 2.$ 
The   {squared Le Cam distance}  between $G_1$ and $G_2$ is
\begin{equation}
    \label{SQLC}
    \Delta(G_1, G_2) \;:=\; \tfrac{1}{2}\int_{\mathbb R} \frac{(\pi_2(x)-\pi_1(x))^2}{\pi_1(x)+\pi_2(x)}\,dx.
\end{equation}
This is a classical measure of distributional difference (see, e.g., \cite{vajda2009metric,taneja2013seven}); it satisfies
    $0 \le \Delta(F_1, F_2) \le 1.$ 
Finally, with $G_1$ and $G_2$ having inverses $G_1^{-1}$ and  $G_2^{-1}$, the  Wasserstein-$p$ distance between $Q_1$ and $Q_2$ is 
\begin{equation}\label{Wass}
{    W_p(Q_1, Q_2) =
   \Big( \int_{0}^1 | G_1^{-1}(q) - G_2^{-1}(q)|^p dq \Big)^{\frac1p}.}
\end{equation}

\subsection{Coverage}
 
Lemma \ref{lemma:lb} captures the intuition that the difference in coverage increases with the contamination fraction $\varepsilon$ and the magnitude of the contaminations, measured by $d_{KS}(F_1, F_2)$. 

\begin{lemma} \label{lemma:lb}
    Under  the mixture model, \eqref{model}
    {when $(X_{n+1}, Y_{n+1}) \sim \pi_1$}, we have
    \begin{equation} \label{eq:mixture_lb}
     - \varepsilon \, \E[F_2(\tilde{q}) - F_1(\tilde{q})]
     \leq \mathbb{P}_1\Big( Y_{n+1} \in \widehat{C}_n(X_{n+1}) \Big) -  (1-\alpha) \leq \frac{1}{n+1} + \varepsilon \, \E[F_1(\tilde{q}) - F_2(\tilde{q})] 
    \end{equation}
    and 
    $ - \varepsilon \, d_{KS}(\Pi_1, \Pi_2) \leq \mathbb{P}_1\Big( Y_{n+1} \in \widehat{C}_n(X_{n+1}) \Big) - (1-\alpha) \leq  \frac{1}{n+1} + \varepsilon \, d_{KS}(\Pi_1, \Pi_2).
    $
\end{lemma}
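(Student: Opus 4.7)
The plan is to reduce everything to the standard split conformal coverage guarantee applied to the contaminated mixture, and then algebraically re-express the resulting probability in terms of the clean-only coverage. Since the calibration points $Z_1,\dots,Z_n$ and the test point $Z_{n+1}$ drawn from $\tilde{\Pi}$ are i.i.d.\ under the mixture model \eqref{model}, they are exchangeable, and so the bounds in \eqref{eq:lower_upper} applied to the contaminated mixture give
\begin{equation*}
  1-\alpha \;\leq\; \mathbb{P}(\tilde{S}_{n+1} \leq \tilde{q}) \;\leq\; 1-\alpha + \tfrac{1}{n+1},
\end{equation*}
where $\tilde{S}_{n+1} \sim \tilde{\Pi}$ is independent of the calibration sample (and hence of $\tilde{q}$).

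Next, I would condition on $\tilde{q}$ and use the independence of the test score from $\tilde{q}$ to write $\mathbb{P}(\tilde{S}_{n+1} \leq \tilde{q}) = \E[\tilde{F}(\tilde{q})]$, and similarly $\mathbb{P}_1(S_{n+1} \leq \tilde{q}) = \E[F_1(\tilde{q})]$ for a fresh clean test point. Plugging in the decomposition $\tilde{F} = (1-\epsilon) F_1 + \epsilon F_2$ gives
\begin{equation*}
  \E[\tilde{F}(\tilde{q})] \;=\; \E[F_1(\tilde{q})] + \epsilon\,\E\!\left[F_2(\tilde{q}) - F_1(\tilde{q})\right],
\end{equation*}
which rearranges to the key identity
\begin{equation*}
  \mathbb{P}_1(S_{n+1} \leq \tilde{q}) \;=\; \mathbb{P}(\tilde{S}_{n+1} \leq \tilde{q}) \;-\; \epsilon\,\E\!\left[F_2(\tilde{q}) - F_1(\tilde{q})\right].
\end{equation*}
Combining this with the two-sided mixture guarantee above yields the first displayed inequality \eqref{eq:mixture_lb} directly.

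For the second pair of bounds, I would invoke the definition \eqref{KS} of $d_{KS}$, which gives the pointwise inequality $|F_1(x) - F_2(x)| \leq d_{KS}(\Pi_1,\Pi_2)$ uniformly in $x$. Taking absolute values and expectations in the identity above yields $|\E[F_2(\tilde{q}) - F_1(\tilde{q})]| \leq d_{KS}(\Pi_1, \Pi_2)$, and substituting this worst-case bound into \eqref{eq:mixture_lb} gives the KS-form of the inequality.

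There is no real obstacle here; the proof is essentially a decomposition of the mixture cdf combined with the standard conformal guarantee. The one point that requires care is justifying that $\tilde{S}_{n+1}$ (and respectively a clean $S_{n+1}$) is independent of $\tilde{q}$, so that $\mathbb{P}(\tilde{S}_{n+1} \leq \tilde{q}) = \E[\tilde{F}(\tilde{q})]$ and $\mathbb{P}_1(S_{n+1} \leq \tilde{q}) = \E[F_1(\tilde{q})]$; this follows because $\tilde{q}$ is a measurable function of the calibration scores only, which are independent of any freshly drawn test point.
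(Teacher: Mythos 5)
Your proposal is correct and follows essentially the same route as the paper: apply the standard guarantee \eqref{eq:lower_upper} to the contaminated scores, condition on $\tilde{q}$ using the independence of the fresh test score from the calibration sample, decompose $\tilde{F}=(1-\epsilon)F_1+\epsilon F_2$, rearrange, and then pass to the Kolmogorov--Smirnov bound via the pointwise inequality $|F_1-F_2|\leq d_{KS}(\Pi_1,\Pi_2)$. The only (harmless) stylistic difference is that you package both directions into a single identity, whereas the paper proves the lower bound explicitly and notes the upper bound is analogous.
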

\begin{proof} Let $\tilde{S}_{n+1}$ be 
sampled from the mixture distribution $\tilde{S}_{n+1} \sim \tilde{P}$. 
We first derive the claimed lower bounds. 
By the lower bound given in Equation \eqref{eq:lower_upper} applied to the mixed data, we have 
    \begin{equation} \label{eq:mixed_guarantee}
    \mathbb{P}(\tilde{S}_{n+1} \leq \tilde{q}) \, \geq  \, 1 - \alpha.
    \end{equation}
Let $P$ be the distribution of the random quantity $\tilde{q}$. Then conditioning yields 
\begin{align} 
\mathbb{P}(\tilde{S}_{n+1} \leq \tilde{q}) &= \int_{\mathbb{R}}\mathbb{P}(\tilde{S}_{n+1} \leq \tilde{q} | \tilde{q} = q) dP(q) 
= \int_{\mathbb{R}} \{ (1-\varepsilon)F_1(q) + \varepsilon F_2(q) \} dP(q),  \label{eq:mixture_int}
\end{align}
where the last step in \eqref{eq:mixture_int} 
follows from the independence of $S_{n+1}$ and $(S_{1}, \ldots, S_n)$.
Substituting  \eqref{eq:mixture_int} into \eqref{eq:mixed_guarantee} and re-arranging gives
$\int_{\mathbb{R}}F_1(q)dP(q) \geq (1-\alpha) - \varepsilon \int_{\mathbb{R}} \{  F_2(q) - F_1(q) \} dP(q)$. 
Finally, un-doing the conditioning gives 
\begin{align}
    \int_{\mathbb{R}} F_1(q) dP(q) = \int_{\mathbb{R}} \mathbb{P}(S_{n+1} \leq q) dP(q) &= \int_{\mathbb{R}} \mathbb{P}(S_{n+1} \leq \tilde{q} | \tilde{q} = q) dP(q) \nonumber
    = \mathbb{P}_1(S_{n+1} \leq \tilde{q}), \nonumber 
\end{align}
with $\mathbb{P}_1$ as in Equation \eqref{eq:P1},  
proving the lower bound in 
\eqref{eq:mixture_lb}. The lower bound in Kolmogorov-Smirnov distance follows directly from \eqref{KS} applied to the lower bound in the inequality  \eqref{eq:mixture_lb}. 
The upper bounds are proved in an analogous fashion, applied to the upper bound 
in  \eqref{eq:lower_upper}.
\end{proof}

The difference $\E [ F_2 (\qtilde) - F_1 (\qtilde)]$ will not generally be small (but can be bounded by 1); it is the factor $\varepsilon$ that can render the bounds in Lemma \ref{lemma:lb} small. 
Hence in this paper we focus on the behaviour of the bounds as $\varepsilon \rightarrow 0$.

\begin{remark}
 Appendix C in \cite{barber2023conformal}  provides a lower bound 
for the coverage probability which is comparable to the lower bound in Lemma \ref{lemma:lb}; in the Huber contamination setting, the bound in \cite{barber2023conformal}  improves on the lower bound in  Lemma \ref{lemma:lb} when $d_{KS}(\Pi_1, \Pi_2)$ is large or, similarly, when $\E[F_1(\tilde{q}) - F_2(\tilde{q})] $ is large, where large means larger than $\alpha/(1-\alpha)$. 
{Moreover, in \cite{barber2023conformal}, a coverage bound with a multiplicative factor $\alpha/(1-\varepsilon)$ is obtained, and an additive version is briefly mentioned {but not detailed}. Related bounds are also developed in \cite{sesia2023adaptive} for label-conditional coverage and for marginal coverage in the case of classification under label noise. 
{In \cite{sesia2023adaptive}, further assumptions on the marginal noise distributions are made such as having bounded probability density functions; moreover, the obtained bounds are typically only informative when the number of observations in each class is bounded away from 0.} 
These results are complementary to our simpler additive bound under general calibration contamination. 
}
\end{remark}
 Next we provide an alternative bound via the squared Le Cam distance \eqref{SQLC}. 
    Here we use the notation
$\pi_i (x)\, dx= dF_i(x) $, for $i=1, 2,$ and $\tilde{\pi} = (1-\varepsilon) \pi_1 + \varepsilon \pi_2.$ 
We first provide a representation of $F_2 - F_1$ which
isolates 
{a} signed weight 
$w_q$ whose variance is explicit and bounded by the variance of a 
Bernoulli random variable with parameter $\tilde F(q)$.
The proof is straightforward and is found in the Appendix.

\begin{lemma} \label{lem:rep}
For any $q\in\mathbb R$, 
\[
F_2(q)-F_1(q) 
= \E_{\tilde \pi}\!\big[ w_q(\tilde T)\,h(\tilde T) \big],
\]
where $ \tilde T\sim \tilde \pi; 
 w_q(x)=(1-\tilde F(q))\,\mathbf 1\{x\le q\}-\tilde F(q)\,\mathbf 1\{x>q\}$, and $h(x)=({\pi_2(x)-\pi_1(x)})/{\tilde \pi(x)}$. 
Moreover,
$
\E_{\tilde \pi}[\,w_q(\tilde T)^2\,]
= \tilde F(q)(1-\tilde F(q)).
$
\end{lemma}
\begin{remark}
    The representation in Lemma \ref{lem:rep} is natural using Stein's method; the function $w_q (x)$ is related to the solution of the so-called Stein equation for 
    $\Ftilde$; see for example \cite{ernst2020first}.
\end{remark}

\begin{proposition}\label{prop:sharper} 
Let $0 < \varepsilon <  1$ and set $\varepsilon_{\min} = \min (\varepsilon, 1-\varepsilon). $
For any $q\in\mathbb R$, 
\[
\varepsilon_{\min}\,|F_2(q)-F_1(q)| 
\;\le\;
\sqrt{2\varepsilon_{\min}}\,\sqrt{\Delta(F_1,F_2)}\,
\sqrt{\tilde F(q)\,\bigl(1-\tilde F(q)\bigr)}.
\]
In particular, for the {(random)} split--conformal threshold $\tilde q$,
\[
\varepsilon\,\E\bigl|F_2(\tilde q)-F_1(\tilde q)\bigr|
\;\le\;
\sqrt{2\varepsilon_{\min}}\,\sqrt{\Delta(F_1,F_2)}\,
\E\!\left[\sqrt{\tilde F(\tilde q)\,\bigl(1-\tilde F(\tilde q)\bigr)}\,\right].
\]
\end{proposition}

\begin{proof}
By Lemma~\ref{lem:rep} and the Cauchy--Schwarz inequality,
$
|F_2(q)-F_1(q)| 
\;\le\; 
\|w_q\|_{L^2(\tilde \pi)}\,\|h\|_{L^2(\tilde \pi)}.
$ 
The first factor satisfies 
\(\|w_q\|_{L^2(\tilde \pi)}^2=\tilde F(q)\bigl(1-\tilde F(q)\bigr)\).  
For the second factor,
\[
\|h\|_{L^2(\tilde \pi)}^2
= \int_{\R}  \frac{(\pi_2 (x) -\pi_1 (x) )^2}{\tilde \pi (x) }\, dx 
\;\le\;
{\frac{2}{\varepsilon_{\min}} \Delta(F_1,F_2).}
\]
Substituting the two bounds {and taking} the expectation over $\tilde q$ proves the second claim.
\end{proof}

\begin{remark}
In general the bounds in Lemma \ref{lemma:lb} are not observable, as $F_1$ and $F_2$ are unknown. However the trivial bound $|\E F_1 (\qtilde) - \E F_2  (\qtilde) | \le 1$ holds, giving as lower bound $(1 - \alpha) -\varepsilon$ and as upper bound $(1 - \alpha) + \frac{1}{n+1}  + \varepsilon.$
    Similarly, as the squared Le Cam distance is bounded by 1,  the bound in Proposition \ref{prop:sharper} can be evaluated by estimating only $\Ftilde$. Moreover, 
       since $\Ftilde(\qtilde)\le 1-\alpha+1/(n+1)$ and $1-\Ftilde(\qtilde)\le \alpha$, 
    {the bound in Proposition \ref{prop:sharper} is no larger than} 
   $
    \sqrt{2\varepsilon\,\alpha\bigl(1-\alpha+1/(n+1)\bigr)}.
   $ 
   {T}he resulting upper bound in Lemma \ref{lemma:lb} is less than {the trivial bound} 1 when 
   $ \varepsilon < {(\alpha - 1/(n+1))
   }/{(\alpha ( 1 - \alpha + 1/(n+1)))}.$ 
 Moreover, this bound is smaller than the standard  bound $\varepsilon$ if $\varepsilon \geq \alpha ( 1 - \alpha + 1/{(n+1)})$. 
\end{remark}

\subsection{Robustness}

Next we compare the expected size of prediction sets constructed using a clean calibration set drawn from $F_1$, and a corrupted sample drawn from the mixture. Here we exploit that $\tilde{q}$ is a random quantile {that can be analyzed} using tools from order statistics. Suppose that $S_1, \ldots, S_n$ are i.i.d.\,{absolutely-continuous} random variables with distribution function $F$ and density $f$. Let $S_{(i)}$ be the $i^{\text{th}}$ order statistic of this sample. Then, {see \cite[p.~108]{fcios}},   
\begin{equation} \label{eq:os_mean}
    \mathbb{E}[S_{(i)}] = \frac{1}{B(i, n-i+1)} \int_{\mathbb{R}} x (F(x))^{i-1} (1 - F(x))^{n-i} f(x) dx, 
\end{equation}
where $B(\alpha, \beta) = \frac{\Gamma(\alpha)\Gamma(\beta)}{\Gamma(\alpha + \beta)}$ is the beta function. 

In the following lemma, we use the {model} \eqref{model} 
and 
the Wasserstein distance from 
\eqref{Wass}. {We use the notion that a random variable $X$ is absolutely continuous if there is an integrable function $f$ such that  $\P(X \le a)  = \int_{-\infty}^a f(y) dy$ for all $a \in \R$.}
\begin{lemma} \label{lemma:w1_bound}
Let $S_1, \dots, S_n$ be scores sampled i.i.d.\,from $\Pi_1$, and let $\tilde{S}_1, \dots, \tilde{S}_n$ be sampled i.i.d.\,from $\tilde{\Pi}$. {Assume these scores are absolutely continuous random variables.} Define $i = \lceil (1-\alpha)(n+1) \rceil$ and let $S_{(i)}, \tilde{S}_{(i)}$ be the $i^{\text{th}}$ order statistics of the first and second samples, respectively. {Then for any $p\in[1,\infty]$,}
\begin{equation}
\big| \mathbb{E}[\tilde{S}_{(i)}] - \mathbb{E}[S_{(i)}] \big| \leq C(n, i)\, {W_p(\tilde{\Pi}, \Pi_1)}, 
\end{equation}
where 
$
C(n, i) \,=\, \sup_{t \in [0, 1]} \frac{t^{i-1}(1-t)^{n-i}}{B(i, n - i + 1)}
\,=\, \frac{1}{B(i, n-i+1)} \left ( \frac{i-1}{n-1} \right )^{i-1} \left ( \frac{n-i}{n-1} \right )^{n-i}.
$
{In particular, if $\tilde{\Pi}=(1-\varepsilon)\Pi_1+\varepsilon\Pi_2$, then for $p=1$}
\begin{align}
\big| \mathbb{E}[\tilde{S}_{(i)}] - \mathbb{E}[S_{(i)}] \big| \leq \varepsilon\, C(n, i)\, {W_1(\Pi_1, \Pi_2)}.
 \label{eq:wass1}
\end{align}
\end{lemma}

\begin{proof}
    The substitution $x=F^{-1}(u)$ in \eqref{eq:os_mean} {(and analogously for $\tilde{F}$) 
    yields
\[
 \mathbb{E}[S_{(i)}] = \int^1_0 F^{-1}(u)\, \frac{u^{i-1}(1-u)^{n-i}}{B(i, n - i + 1)}\, du
\quad \text{ and } \quad 
 \mathbb{E}[\tilde{S}_{(i)}] = \int^1_0 \tilde{F}^{-1}(u)\, \frac{u^{i-1}(1-u)^{n-i}}{B(i, n - i + 1)}\, du.
\]
Therefore,
\begin{align*}
  &  \big | \mathbb{E}[\tilde{S}_{(i)}] - \mathbb{E}[S_{(i)}] \big | 
    \leq   \int_{0}^1 \Big| \tilde{F}^{-1}(u) - F_1^{-1}(u) \Big|\,
            \frac{u^{i-1}(1-u)^{n-i}}{B(i, n - i + 1)} \, du \\
    &\leq {\left\| \frac{u^{i-1}(1-u)^{n-i}}{B(i, n-i+1)} \right\|_{L^p([0,1])}}
          \left (\int_{0}^1 \left |\tilde{F}^{-1}(u) - F_1^{-1}(u) \right |^p du \right )^{\frac{1}{p}} \stepcounter{equation} \tag{\theequation}\label{eq:holder} = {C(n,i)}\, {W_q(\tilde{\Pi}, \Pi_1),}
\end{align*} 
{where we applied H\"older's inequality with conjugate exponents $(p,q)$}.}
For $q=1$, the 1D quantile representation gives $W_1(\tilde{\Pi}, \Pi_1)=\int_0^1 |\tilde{F}^{-1}(u)-F_1^{-1}(u)|\,du$. If $\tilde{\Pi}=(1-\varepsilon)\Pi_1+\varepsilon\Pi_2$, a coupling that matches the $(1-\varepsilon)$-mass of $\Pi_1$ to itself and transports only the $\varepsilon$-mass from $\Pi_2$ to $\Pi_1$ yields $W_1(\tilde{\Pi}, \Pi_1)\le \varepsilon\,W_1(\Pi_2,\Pi_1)$, which implies the stated particular case.
\end{proof}

\begin{remark}
{For 
$p\in[1,\infty)$ and 
$\tilde{\Pi}=(1-\varepsilon)\Pi_1+\varepsilon\Pi_2$, a standard coupling together with convexity of $W_p$ yields}
${W_p(\tilde{\Pi},\Pi_1)\;\le\;\varepsilon^{1/p}\,W_p(\Pi_1,\Pi_2).}$ 
{Thus, Lemma \ref{lemma:w1_bound} implies that}
${\big|\mathbb{E}[\tilde{S}_{(i)}]-\mathbb{E}[S_{(i)}]\big|\;\le\;C(n,i)\,\varepsilon^{1/p}\,W_p(\Pi_1,\Pi_2),}
$
{which for $p=1$ reduces to \eqref{eq:wass1}. 
For $p=\infty$, one has $W_\infty(\tilde{\Pi},\Pi_1)\le W_\infty(\Pi_2,\Pi_1)$, so the same bound holds with $\varepsilon^{1/p}$ replaced by $1$.}
\end{remark}

Lemma \ref{lemma:w1_bound} thus gives a quantitative version of the intuition that when the mixture distributions are close, then so will be the quantiles of their scores. {These quantiles determine the prediction intervals, so if the quantiles are close then the prediction intervals will also be close.} In this sense, Lemma \ref{lemma:w1_bound} {serves as} a robustness guarantee.

The assumption in Lemma \ref{lemma:w1_bound} that the scores are absolutely continuous is mild;
any ties are broken at random 
by adding a small  continuous random variable with  uniform distribution, 
rendering the thus adjusted scores  absolutely continuous while retaining exchangeability.

\subsection{Theoretical Guarantees under Stochastic Dominance} \label{sec:over_under}
In the previous subsection we derived bounds on the coverage and {robustness} 
of the prediction sets constructed using contaminated data. We now introduce some conditions under which the Huber model \eqref{model} will always lead to over- {or under-coverage} under $S_{n+1} \sim \Pi_1$. 
In the following we will use a notion of stochastic ordering between random variables known as \textit{first-order stochastic dominance}. For two real-valued variables $X_1 \sim \Pi_1, X_2 \sim \Pi_2$, with corresponding {cdfs} $F_1$ and $F_2$, we say $X_2$ \textit{first-order stochastically dominates} $X_1$, and write $\Pi_1 \leq_{\mathrm{s.d.}} \Pi_2$, if 
\[
1 - F_1(x) \leq 1 - F_2(x) \quad \text{for all } x \in \mathbb{R}.
\] 

\paragraph{Over-coverage.} 
From the lower bound in \eqref{eq:mixture_lb}, if $\Pi_1 \leq_{s.d} \Pi_2$ , then  $\E[F_2(\tilde{q}) - F_1(\tilde{q})] \leq 0$ and 
\begin{equation*}
    \P_1(S_{n+1} \leq \tilde{q}) \geq 1 - \alpha - \varepsilon \,  \E[F_2(\tilde{q}) - F_1(\tilde{q})] 
    \geq 1 - \alpha.
\end{equation*}
In this case, conformal prediction still provides (conservative) coverage, but 
prediction set sizes may be inflated. 
We 
note that if $\Pi_1 \leq_{s.d} \Pi_2$ then we also have that $\Pi_1 \leq_{s.d} \tilde{\Pi}$, as
\begin{align}
    \tilde{F}(x) \leq F_1(x) &\iff (1-\varepsilon) F_1(x) + \varepsilon F_2(x) \leq F_1(x)
    \iff F_2(x) \leq  F_1(x).\label{eq:over_mixture_cond}
\end{align}

\paragraph{Under-Coverage.}
If instead the contaminating distribution $\Pi_2$ stochastically dominates the clean distribution $\Pi_1$, 
Inequality \eqref{eq:mixture_lb} in Lemma \ref{lemma:lb}  
implies that for a $1-\alpha$ upper coverage bound we need a slightly stronger condition than $\Pi_2 \leq_{s.d} \Pi_1$, namely that 
\begin{equation} \label{eq:undercvg_cond}
    F_1(x) - F_2(x) \leq  - \frac{1}{\varepsilon (n+1)} \quad \text{for all }x\in \R.
\end{equation}
In this case by 
\eqref{eq:mixture_lb} 
we have
\begin{align*}
    \P_1(S_{n+1} \leq \tilde{q}) &\leq 
    (1 - \alpha) + \frac{1}{n+1} + \varepsilon \, \E[F_2(\tilde{q}) - F_1(\tilde{q})] 
    \le (1 - \alpha) + \frac{1}{n+1} - \frac{\varepsilon}{\varepsilon(n+1)} = 1- \alpha.
\end{align*}

 \begin{example}[A Regression Example]
\label{ex:regression} 
Consider the 
regression model 
\begin{align} \label{regmodel}
    Y &= f(X) + E, \text{ with } 
    E \sim (1-\varepsilon) \, \mathcal{N}(0, \sigma_1^2) + \varepsilon \, \mathcal{N}(0, \sigma_2^2), 
\end{align}
where $\mathcal{N}(\mu, \sigma^2)$ denotes a {normally distributed} random variable with mean $\mu$ and variance $\sigma^2$, and $f: \mathcal{X} \rightarrow \R$ is {a measurable} 
regression function. 
Assume that 
{a} forecaster has access to the oracle model, i.e. $\hat{f} = f$, and uses the absolute residual score function $S(X, Y) = |Y - \hat{f}(X)|$. Then the scores from each mixture component are distributed as a half normal distribution, {which has cdf}  
\begin{equation}
    F_{i}(x) = \operatorname{erf}\left(\frac{x}{\sqrt{2}\sigma_i}\right), \quad x \geq 0 \label{eq:undercvg_mixture_cond}
\end{equation}
{for $i=1,2$.} The error function  $\operatorname{erf}$ is increasing and so the clean distribution {is stochastically dominated by the contaminated one (hence coverage is conservative)} if $\sigma_1 \leq \sigma_2$. 

{However,} if $\sigma_1 > \sigma_2$, then {for $x > 0$,}
\begin{align*}
    F_1(x) - F_2(x) &= \frac{2}{\pi}\int^{\frac{x}{\sqrt{2}\sigma_1}}_{\frac{x}{\sqrt{2}\sigma_2} } e^{-t^2} dt 
     \leq 
      -  \frac{\sqrt{2}x}{\pi}\left( \frac{1}{\sigma_2} - \frac{1}{\sigma_1} \right ) {e^{-\frac{x^2}{ 2 \sigma_1^2}}}.
\end{align*}
{Comparing with \eqref{eq:undercvg_cond}, we see that under-coverage requires the condition to hold for all $x \ge 0$:}
\begin{equation} \label{eq:regr_ucvg}
\frac{\sqrt{2}x}{\pi}\left ( \frac{1}{\sigma_2} - \frac{1}{\sigma_1} \right ) {e^{-\frac{x^2}{ 2 \sigma_1^2}}}
\geq \frac{1}{\varepsilon(n+1)}. \end{equation}
{As $x > 0$ can be chosen to violate \eqref{eq:regr_ucvg},}
$\sigma_1 > \sigma_2$ is not sufficient {to guarantee} under-coverage.

To demonstrate the sensitivity of coverage to the noise parameters $\sigma_2$ and $\varepsilon$ we perform the following experiment using the model \eqref{regmodel} with  
$f(X) = \beta^T X$
and $\sigma_1=1$, while varying $\varepsilon$ and $\sigma_2$. 
We choose the  parameter vector
$\beta \in \R^p$ as a standard multivariate Gaussian vector and keep this $\beta$ fixed.
For each choice of $\sigma_2$ and $\varepsilon$, we draw 1000 samples from this model for training and test. We fit a linear regression model on the training set and use the fitted model to calibrate conformal prediction with the absolute residual score function. We then draw 1000 clean  samples for testing (i.e. with $\varepsilon=0$ in 
\eqref{regmodel}), construct prediction sets for each point and record the mean coverage. We repeat this experiment 100 times for a range of choices of $\varepsilon$ and $\sigma_2$
and plot the mean and standard deviation of the coverage over these repetitions in Figure \ref{fig:coverage_abl}. In the left panel we see the transition between over- and under-coverage around $\sigma_1 = \sigma_2 = 1.0$. 
 \begin{figure}[t]
     \centering
\includegraphics[width=0.8\textwidth]{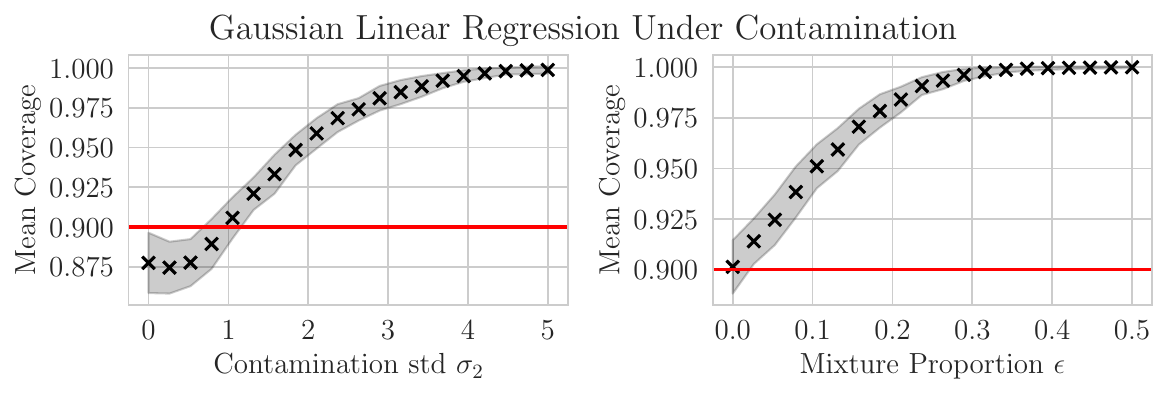}
     \caption{The mean and standard deviation of the coverage obtained over 100 repetitions of the regression experiment while varying $\varepsilon$ and $\sigma_2$. 
     The mean coverage is marked by crosses; {shaded regions} {indicate} one standard deviation of the coverage; the  straight horizontal line (in red) is  the desired coverage level $0.9$. Left: we vary the standard deviation of the corruption $\sigma_2$ from 0 to 5, keeping $\varepsilon = 0.2$. Right: we vary the mixing proportion $\varepsilon$ from $0$ to $0.5$, keeping $\sigma_2=3.0$.}
    \label{fig:coverage_abl}
 \vspace{-.5cm}
 \end{figure}
\end{example}

The condition for over-coverage given in Equation \eqref{eq:over_mixture_cond} is weaker than the condition for under-coverage given in Equation \eqref{eq:undercvg_mixture_cond}; {over-coverage requires only stochastic dominance, i.e.\ $\Pi_1 \leq_{s.d} \Pi_2$, whereas under-coverage requires stochastic dominance with an additional margin that depends on both the sample size and the mixture proportion}. This finding suggests that conformal prediction possesses some \say{inbuilt} robustness to large outliers,  {since it continues to provide valid coverage (albeit with inflated prediction set sizes) regardless of the corruption level}, but there is no such guarantee when the outliers are small. 

 \subsection{Classification under Label Noise} \label{sec:classification_overcoverage}

Suppose now the prediction set is constructed for a classification problem such that the targets take a discrete set of values $\mathcal{Y} = [K] = \lbrace 1, \dots, K \rbrace$. We write the generative model for the calibration data as $X_i \sim F_X$, and $Y_i \sim F_{Y|X}$, and assume that labels are corrupted with probability $\varepsilon \in (0, \frac{1}{2})
$, independently of the conditional distribution $X|Y$; we denote a sample from the corrupting distribution as ${Y}_i^c$. Then the observed class label $\tilde{Y}_i$ equals a draw $Y_i$ from $F_{Y|X}$ with probability $1-\varepsilon$, and 
a draw $Y_i^c$ from the corrupting distribution with probability $\varepsilon$. {By chance the draws $Y_i$ and $Y_i^c$ may coincide, so the label appears uncorrupted even when noise is applied.} 
{This setup is an analogue of the Huber contamination model applied to the label space: with probability $1-\varepsilon$ the label is drawn from the clean conditional distribution, while with probability $\varepsilon$ it is replaced by a corrupted label.}
{Such a model is a randomised response procedure  used for privacy protection; see, for example, \cite{nayak2016concise} and \cite{penso2025privacy}.}

  Let 
 $
     P_i = \Pi_1(i) = \P(Y=i), $ and $ \tilde{P}_i = \tilde{P}_i(\varepsilon)= \tilde{\Pi}(i) = \P(\tilde{Y}=i), 
$ 
for $i \in [K]$, be the marginal label probabilities. Let $P \in \R^{K \times K}$ be the matrix with entries
    $ P_{ji} = P_{ji}(\varepsilon)= \P(Y = j {\mid} \tilde{Y} = i)   $; 
we assume that $P$ is invertible {(in particular, $\tilde P_i>0$ for all $i$)}.  
Setting
    $\tilde{P}_{ij} = \tilde{P}_{ij}(\varepsilon)= \P(\tilde{Y} = i {\mid} Y=j) $, we have 
\begin{equation} \label{eq:P_ji}
   \tilde{P}_i  P_{ji} = \tilde{P}_{ij} P_j   
\end{equation}
 by Bayes' rule.  
In what follows we suppress the argument $\varepsilon$ {for brevity}. Finally, we define 
\begin{align*}
    F_1(q; i,j) &= \P(S(X, i) \leq q {\mid} Y = j) \text{ and } 
    \tilde{F}(q; i,j) = \P(S(X, i) \leq q {\mid} \tilde{Y} = j) ; \quad \quad q \in \R, \quad i, j \in [K]
\end{align*}
as the {cdfs of the} conditional distribution of the score assigned to label $i$ given that the true or observed noisy label is equal to $j$. 
{In analogy to stochastic dominance} 
we use the condition 
\begin{equation}
    \label{condition:over}
    \max_{c: c \neq i} \, \P(S(X, c) \leq q {\mid} Y = i) \leq \P(S(X, i) \leq q {\mid} Y=i), \text{ for all }q \in \R, \;  i \in [K]. 
\end{equation} 
Condition \eqref{condition:over} 
states that the true label is assigned the highest probability under the fitted classifier, {which implicitly requires the classifier to be well estimated}.
{Due to the negative orientation of the score,  it is satisfied for example
for $\varepsilon=0$    if 
 $S(x, c) \geq S(x, i)$ when $i$ is the true label for $x$ and $c$ is any other label.}
\begin{proposition} \label{lemma:classification_overcoverage} Suppose that 
\eqref{condition:over}
    holds. 
Then 
$\P_1(Y_{n+1} \in \widehat{C}_n(X_{n+1{}})) \, \geq \, 1-\alpha. $
\end{proposition}

\begin{proof}
 {As, for all $q$, 
$\epsilon (F_1(q) - F_2(q)) 
= F_1 (q) - \tilde{F}(q)$, in view of Lemma \ref{lemma:lb} it suffices to show that 
$F_1(q) - \tilde{F}(q)  \, \geq\,  0$ for all $q$.} 
{Now, 
\begin{align}\label{eq:f1}
  F_1(q) = \sum_i P_i \P (S(X,i) \le q | Y = i) = \sum_{i=1}^K P_i F_1 (q, i ,i)  
\end{align}
and similarly
$\tilde{F}(q) = \sum_i \tilde{P}_i \P (S(X,i) \le q | \tilde{Y} = i) = \sum_{i=1}^K \tilde{P}_i \tilde{F} (q, i, i).$
Moreover as we assume ${Y}^c \perp X|Y$, 
given $Y=j$, knowing ${Y}^c$ does not contain any additional information on $S(X,i)$ and hence we have 
$\P (S(X,i) \le q \, | \, \tilde{Y} = i, Y =j )= \P (S(X,i) \le q | Y =j ),$
giving  
\begin{equation} \label{eq:f_tilde_mixture}
    \tilde{F}(q, i, i) = \sum_{j=1}^K \P( Y=j | \tilde{Y} = i) \P (S(X,i) \le q \,| \, \tilde{Y} = i, Y =j ) = \sum_{j=1}^K P_{ji} F_1 (q, i,j).
\end{equation} 
Using \eqref{eq:P_ji}, we obtain  
 $  \tilde{P}_i \tilde{F}(q, i, i) =    
   \sum_{j=1}^K {P}_j \tilde{P}_{ij} F_1(q; i,j) .$
 } 
{{Therefore,}
 \[
 {\tilde{F}(q) = \sum_{i=1}^K \tilde{P}_i \tilde{F}(q;i,i) 
      = \sum_{j=1}^K \Big[ P_j \tilde{P}_{jj} F_1(q;j,j) + \sum_{i: i\neq j} P_j \tilde{P}_{ij} F_1(q; i,j) \Big].}
 \]}
 {Thus, {with \eqref{eq:f1},}}
 \begin{align*}
    {F_1(q) - \tilde{F}(q)} 
      {= \sum_{j=1}^K P_j \Big[ F_1(q;j,j) - \tilde{P}_{jj} F_1(q;j,j) - \sum_{i: i\neq j} \tilde{P}_{ij} F_1(q; i,j) \Big]} \\
      {= \sum_{j=1}^K P_j \Big[ (1-\tilde{P}_{jj}) F_1(q;j,j) - \sum_{i: i\neq j} \tilde{P}_{ij} F_1(q; i,j) \Big].}
     \end{align*}
 {Using that $\sum_{i: i\ne j}\tilde{P}_{ij}=1-\tilde{P}_{jj}$ and the condition \eqref{condition:over}, we obtain the desired inequality;}
$  {F_1(q) - \tilde{F}(q)}
     \geq 
  \sum_{i=1}^K P_i (1 - \tilde{P}_{ii})\big( F_1(q;i,i) - \max_{{c:} c \neq i} F_1(q; c,i{)} \big)  |\, \geq 0 \, . 
    $
\end{proof}

\begin{example}[Uniform Noise] \label{ex:uniform}
As a classification example, assume that the corrupting noise chooses one of the $K$ labels uniformly at random, regardless of the true label, {so that ${{Y}}^c$ follows the uniform distribution on $[K]$, and 
assume
that $P_j = \frac{1}{K}$ also, so that the only difference between $Y$ and $Y^c$ is that $Y$ contains signal on $X$ whereas $Y^c$ does not. Then
$ \P (\tilde
{Y}=i ) = \frac{1}{K}$
and {by} Bayes{' } rule \eqref{eq:P_ji},
\begin{equation}
{P}_{ji} = \P ( Y = j{ \mid } \tilde{Y}=i )
= \frac{\P ( Y = j, \tilde{Y}=i )}{ \P (\tilde{Y}=i ) }
= 
\frac{\varepsilon}{K} + (1-\varepsilon) {\bf 1} (i = j).
\label{eq:classmodel} \end{equation}
The matrix 
$
{P} = (1-\varepsilon) I + \frac{\varepsilon}{K}\mathbf{1}\mathbf{1}^T$ {is symmetric,} 
where $\mathbf{1} \in \R^K$ is the vector of all ones, and $I \in \R^{K \times K}$ is the identity matrix. For $\varepsilon\neq 1$ (hence $1-\varepsilon>0$), $P$ is invertible. The Sherman{–}Morrison formula gives
\begin{align}
    {P}^{-1} &= \frac{1}{1-\varepsilon}I - \frac{\frac{\varepsilon}{K(1-\varepsilon)^2}\mathbf{1}\mathbf{1}^T}{1 + \frac{K\varepsilon}{K(1-\varepsilon)}} 
    = \frac{1}{1-\varepsilon}I - \frac{\varepsilon}{K(1-\varepsilon)}\mathbf{1}\mathbf{1}^T. 
\end{align} 
}
\end{example}

\section{{Contamination Robust Conformal Prediction}}
\label{sec:remedy}
{It is a natural question whether in the case of over-coverage, the amount of over-coverage can be assessed.}
In the following, we abbreviate $$g(q) = F_1(q) - \tilde{F}(q).$$ With $S_{(i)}$ the $i^{th}$ order statistic of the scores in the calibration data with $i = \lceil (1-\alpha)(n+1) \rceil$, as for \eqref{eq:hatq}, we can {rewrite} the lower bound of {inequality} \eqref{eq:mixture_lb} as
$$ \P_1(Y_{n+1} \in \widehat{C}_n(X_{n+1})) \geq 1 - \alpha + \mathbb{E}[g(S_{(i)})], $$
where the expectation is taken over the calibration data. In general $\mathbb{E}[g(S_{(i)})]$ is not available; in the model \eqref{model}, by construction $|g(x)| \le \varepsilon,$ and under {first-order} stochastic dominance, if $\Pi_2$ {first-order} stochastically dominates  $\Pi_1$, we have $0 \le g(x) \le \varepsilon.$ 

For classification under label noise, when the assumptions for Proposition \ref{lemma:classification_overcoverage} are satisfied, we {next} show that under some extra conditions it is possible to {estimate  $g(S_{(i)})$, and use this estimate}  to adjust the nominal level to construct tighter prediction sets. 
The key insight is that, if we assume the corruption applied to the targets is independent of $X|Y$, we can write $F_1(q)$ in terms of the {mixture cdf} $\tilde{F}$, which allows us to use the contaminated samples to estimate $g(S_{(i)})$.

First, viewing \eqref{eq:f_tilde_mixture} as a matrix equation, we obtain an expression for $F_1(q; i, i)$ in terms of the inverse of $P$. 
Define the matrices $F_1(q), \tilde{F}(q) \in \R^{K \times K}$, where $(F_1(q))_{i,j} = F_1(q;  i,j)$ and $(\tilde{F}(q))_{i, j} = \tilde{F}(q;i,j)$. 
Similarly to Equation \eqref{eq:f_tilde_mixture} we have
\begin{equation} \label{eq:f_tilde_mixture2}
    \tilde{F}(q, i, j) = \sum_{k=1}^K \P( Y=k {\mid} \tilde{Y} = j)\, \P (S(X,i) \le q \, {\mid} \, \tilde{Y} = j, Y =k ) = \sum_{k=1}^K P_{kj} F_1 (q, i,k).
\end{equation}
so that $\tilde{F}(q) = F_1(q) P.$  
If $P$ is invertible, we can write
$ F_1(q) =  \tilde{F}(q) P^{-1}$ 
and read off that
\begin{equation} \label{eq:inverse_p_expression}
    F_1(q; i, i) = \sum_{j} P^{-1}_{ji} \tilde{F}(q; i, j).
\end{equation}
Using Equation \eqref{eq:inverse_p_expression}, we can now write $g(q)$ as
\begin{align}
g(q) 
&= \sum_{i=1}^K \big( P_i F_1(q;i,i) - \tilde{P}_i \tilde{F}(q;i,i)\big) = \sum_{i=1}^K \sum_{j=1}^K  P_i P^{-1}_{ji} \tilde{F}(q;i,j)  \; {-} \; \sum_{i=1}^K \tilde{P}_i \tilde{F}(q;i,i).  
\label{eq:over_as_f_tilde}
\end{align}
If $P_i$, $\tilde{P}_i$ and $P^{-1}$ are known, then \eqref{eq:over_as_f_tilde} suggests that it may be possible to estimate $g(q)$ using a plug-in estimate of $\tilde{F}$. In particular, we define 
\begin{equation}
    \tilde{F}_n(q, i, j) = \frac{\sum_{\ell=1}^n {\bf{1}} \big(S(X_\ell, i) \le q\big)\, {\bf{1}} \big({\tilde{Y}}_\ell =j\big)}{\sum_{\ell=1}^n {\bf{1}} \big({\tilde{Y}}_\ell =j\big)} 
    \label{eq:ecdf}
\end{equation}
as the empirical conditional cdf computed using the observed but contaminated data points drawn from $\tilde{F}$; {here $0/0 :=0$}. We construct an estimator of $g(q)$ as 
\begin{align} \label{eq:hatg}
    \hat{g}_n(q) 
    &= \sum_{i=1}^K \sum_{j=1}^K  P_i P^{-1}_{ji} \tilde{F}_n(q, i, j)  \; {-} \; \sum_{i=1}^K \tilde{P}_i \tilde{F}_n(q, i, i).
\end{align}

Then \eqref{eq:hatg} is a consistent estimator of $g(q)$ as $n$, the number of calibration data points, tends to infinity; see for example \cite{stute86}. This suggests the following approach: rather than choosing $i = \lceil (1-\alpha)(n+1) \rceil$, we take $i$ to be the smallest $i \in [n]$ such that
$$ \frac{i}{n+1} \geq 1 - \alpha - \hat{g}_n(S_{(i)}) $$ 
if such a choice of $i$ exists. Repeating the steps of the proof of Lemma \ref{lemma:lb}, we have that the coverage of this approach is lower bounded as
\begin{align*}
    \P_1(Y_{n+1} \in \widehat{C}_n(X_{n+1})) \geq 1 - \alpha + \mathbb{E}[g(S_{(i)}) - \hat{g}_n(S_{(i)})].
\end{align*}
As $n \rightarrow \infty$, we have that $\mathbb{E}[g(S_{(i)}) - \hat{g}_n(S_{(i)})] \rightarrow 0$ and we recover the desired coverage guarantee.

The calibration set is finite{; however}, and an appealing property of conformal prediction is that the coverage guarantee holds in finite samples. {Hence we derive a procedure which does not rely on taking a limit.}
To this end, we seek an upper bound $C{=C(n,\varepsilon)}$ such that 
$$ \mathbb{E}[\hat{g}{_n}(S_{(i)}) - g(S_{(i)})] \leq C. $$
Using this upper bound, 
we {now} take $i$ to be the smallest $i \in [n]$ such that
\begin{align}
    \label{ineq:crcp} 
\frac{i}{n+1} \geq 1 - \alpha - \hat{g}{_n}(S_{(i)}) + C.
\end{align} 
By the definition of $C$,  this choice of $i$ (if it exists) provides a coverage guarantee of 
\begin{align*}
    \P_1(Y_{n+1} \in \widehat{C}_n(X_{n+1})) &\geq 1 - \alpha + \mathbb{E}[g(S_{(i)}) - \hat{g}{_n}(S_{(i)})] - C 
    \geq 1 - \alpha. 
\end{align*}
    We call this method {\it Contamination Robust Conformal Prediction}, or CRCP for short. 
Next, using the notation from Section \ref{sec:classification_overcoverage},  we give  such a  theoretical upper bound ${C = C(n, \varepsilon)}$. 

\begin{theorem} \label{lemma:classification_estimator_bound}
{Set  $w^{(1)}_i = P_{i,i}^{-1}P_i - \tilde{P}_i$,  $w^{(2)}_{ij} = P_i P^{-1}_{ji}$, and 
$ b(n, j) = (1 - \tilde{P}_j)^n + \sqrt{\tfrac{\pi}{n \tilde{P}_j}} $.}
Then
\begin{align} \label{thmbound}
\mathbb{E}\big[ \,| \hat{g}{_n}(S_{(i)}) - g(S_{(i)})| \,\big] 
\;\leq\; {C(n, \varepsilon)} := 
    \sum_{i=1}^K \Big( |w_i^{(1)}|\,b(n, i) + \sum_{{j: j\neq i}} |w_{ij}^{(2)}|\, b(n, j) \Big).
\end{align}
\end{theorem}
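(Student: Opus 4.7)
The plan is to decompose the difference $\hat g_n(q)-g(q)$ as an explicit linear combination of the empirical-CDF errors $\tilde F_n(q;i,j)-\tilde F(q;i,j)$, with coefficients exactly $w^{(1)}_i$ and $w^{(2)}_{ij}$, and then to bound each expected uniform deviation by $b(n,j)$ via a DKW argument combined with a reduction over the random stratum size.

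Splitting the inner sum in \eqref{eq:over_as_f_tilde} into the $j=i$ and $j\ne i$ parts and collecting coefficients using $w^{(1)}_i=P_iP^{-1}_{ii}-\tilde P_i$ and $w^{(2)}_{ij}=P_iP^{-1}_{ji}$, I can write
$$g(q)=\sum_{i=1}^K w^{(1)}_i\,\tilde F(q;i,i)+\sum_{i\ne j}w^{(2)}_{ij}\,\tilde F(q;i,j),$$
and the identical identity for $\hat g_n(q)$ with $\tilde F_n$ in place of $\tilde F$. Subtracting, applying the triangle inequality at $q=S_{(i)}$, and dominating each pointwise error by its supremum over $q$ (which is a valid upper bound however $S_{(i)}$ depends on the data), taking expectations yields
$$\mathbb E\bigl[|\hat g_n(S_{(i)})-g(S_{(i)})|\bigr]\le \sum_{i=1}^K|w^{(1)}_i|\,\Delta_{i,i}+\sum_{i\ne j}|w^{(2)}_{ij}|\,\Delta_{i,j},$$
where $\Delta_{i,j}:=\mathbb E[\sup_q|\tilde F_n(q;i,j)-\tilde F(q;i,j)|]$. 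So the entire task reduces to establishing $\Delta_{i,j}\le b(n,j)$.

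For this last step I would condition on the stratum count $N_j=\sum_\ell\mathbf 1(y_\ell=j)\sim\mathrm{Bin}(n,\tilde P_j)$. On $\{N_j=0\}$ the convention $0/0:=0$ in \eqref{eq:ecdf} makes $\tilde F_n(\cdot;i,j)\equiv 0$, contributing at most $(1-\tilde P_j)^n$. Conditional on $N_j=m\ge 1$, the samples $\{S(X_\ell,i):y_\ell=j\}$ are $m$ i.i.d.\ draws with cdf $\tilde F(\cdot;i,j)$, so the DKW inequality with Massart's tight constant gives $\mathbb P(\sup_q|\cdot|>t\mid N_j=m)\le 2e^{-2mt^2}$ and hence $\mathbb E[\sup_q|\cdot|\mid N_j=m]\le\int_0^{\infty}2e^{-2mt^2}\,dt=\sqrt{\pi/(2m)}$. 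The main obstacle is now converting this random $m^{-1/2}$ bound into a deterministic $(n\tilde P_j)^{-1/2}$ rate: Jensen's inequality points the wrong way for $x\mapsto x^{-1/2}$, so a direct swap of expectation and square root is not available. I would circumvent this with the elementary estimate $1/m\le 2/(m+1)$ for $m\ge 1$ combined with the closed form $\mathbb E[1/(N_j+1)]=(1-(1-\tilde P_j)^{n+1})/((n+1)\tilde P_j)\le 1/(n\tilde P_j)$, followed by Cauchy--Schwarz, which together give $\mathbb E[N_j^{-1/2}\mathbf 1(N_j\ge1)]\le\sqrt{2/(n\tilde P_j)}$. Multiplying by $\sqrt{\pi/2}$ from the DKW step and adding the $N_j=0$ contribution delivers $\Delta_{i,j}\le(1-\tilde P_j)^n+\sqrt{\pi/(n\tilde P_j)}=b(n,j)$, and inserting this into the display above yields the stated $B(n,\epsilon)$.
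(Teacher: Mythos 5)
Your proposal is correct, and its skeleton coincides with the paper's: the same splitting of \eqref{eq:over_as_f_tilde} into the $j=i$ and $j\neq i$ parts with coefficients $w^{(1)}_i$ and $w^{(2)}_{ij}$ (the paper's Equation \eqref{eq:diagonal_separated}), the same passage from $q=S_{(i)}$ to a supremum over $q$, and the same reduction of the problem to bounding $\E\left[\sup_q|\tilde F_n(q;i,j)-\tilde F(q;i,j)|\right]$ by $b(n,j)$, handled by conditioning on the stratum of label-$j$ points (empty stratum contributing $(1-\tilde P_j)^n$) and applying Massart's form of the DKW inequality to get the conditional bound $\sqrt{\pi/(2m)}$ — this is exactly Lemma \ref{lemma:dqij_bound}. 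Where you genuinely diverge is the last technical step, converting the random $m^{-1/2}$ into the deterministic rate $(n\tilde P_j)^{-1/2}$: the paper rewrites $\E[N_j^{-1/2}\mathbf 1(N_j\ge 1)]$ via the identity $k\binom{n}{k}=n\binom{n-1}{k-1}$ as $n\tilde P_j\,\E[(1+B)^{-3/2}]$ with $B\sim Bin(n-1,\tilde P_j)$, and then bounds this in Appendix \ref{app:bin_proof} by Cauchy--Schwarz together with a size-biasing argument; you instead use Cauchy--Schwarz (or Jensen) to reduce to $\E[N_j^{-1}\mathbf 1(N_j\ge 1)]$, then the elementary inequality $1/m\le 2/(m+1)$ for $m\ge 1$ and the closed form $\E[1/(N_j+1)]=(1-(1-\tilde P_j)^{n+1})/((n+1)\tilde P_j)\le 1/(n\tilde P_j)$. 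Both routes yield exactly the same constant, $\sqrt{\pi/(n\tilde P_j)}$, so the bounds agree; your version is more elementary and avoids the size-bias machinery entirely, at the cost of not exhibiting the (occasionally useful) representation through $\E[(1+B)^{-3/2}]$. In short: a valid proof, structurally the paper's, with a simpler replacement for the inverse-moment lemma.
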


{In the bound \gr{$C(n, \varepsilon)$ in} \eqref{thmbound},  $w_i^{(1)}$ and $w_{ij}^{(2)}$ may depend on $\varepsilon$, but they do not depend on $n$, and  $b(n,j) \rightarrow 0$ as $n\rightarrow \infty$ as long as $K$ is fixed. Hence the bound will tend to 0 with increasing $n$, so that for any fixed $\alpha$, for large enough $n$ it will be possible to find $i=i(n)  < n+1 $  such that the inequality \eqref{ineq:crcp} holds.
Moreover, if $\varepsilon=0$ then $P$ is the $K \times K$ identity matrix and $\tilde{P}_i = P_i$, so that both $w_i^{(1)}$ and $w_{ij}^{(2)}, i \ne j$, equal zero; in this sense the bound \eqref{thmbound} is sharp.}

\gr{In this paper the noise level $\varepsilon$ is known. This assumption is not very realistic, but in some real data applications, it may be possible to give an upper bound on the level of contamination. In the next example, which we give before proving Theorem \ref{lemma:classification_estimator_bound},  $C(n,\varepsilon)$ increases monotonically in $\varepsilon$ and hence can be used as worst-case bound when a bound on $\varepsilon$ is available.}

\begin{example} \label{ex:unif2} 
In the uniform noise model from Example \ref{ex:uniform}, with 
\eqref{eq:classmodel}, we have that for the first coefficients, 
$\big| w_i^{(1)} \big| =  \Big| \frac{1}{1 - \varepsilon} \Big( 1 - \frac{\varepsilon}{K} \Big) \frac{1}{K} - \frac{1}{K} \Big| = \frac{\varepsilon(K-1)}{K^2(1 - \varepsilon) }$ and for $i \ne j,$
$\big| w_{ij}^{(2)} \big| = \frac{\varepsilon}{K^2(1 - \varepsilon)} $. Moreover,
$\tilde{P}_j
=\frac{1}{K}$
so that 
the bound in \eqref{thmbound} {simplifies to $\frac{\varepsilon}{1- \varepsilon} \big( ( 1 - 1/K)^n + \sqrt{\frac{\pi K}{n}}\big) $. This expression}
tends to 0  monotonically when $n \rightarrow \infty $ and decreases monotonically to 0 when $\varepsilon \rightarrow 0$. {The increase in $K$ is $O (\sqrt{K})$. 

 \cite{penso2025conformal} give an alternative bound;  with probability $1 - \delta,$  we have 
$|\P_1 (Y_{n+1} \in \tilde{C}_n (X_{n+1}){)} - (1- \alpha)| \le \sqrt{\log (4 /\delta) / (2 n h^2)}$ where  $h = (1-\varepsilon)/(1 + \varepsilon).$ This bound  does not depend on $K$.
However the bound in \cite{penso2025conformal} does not tend to 0 with $\varepsilon\rightarrow 0$. 
Moreover, the method in \cite{penso2025conformal} does not easily generalise  to  the more general setting of Theorem \ref{lemma:classification_estimator_bound}.}
\end{example}

  We prove Theorem \ref{lemma:classification_estimator_bound}  in a number of steps. Recalling \eqref{eq:ecdf}, 
 with 
$ n_j = \sum_{\ell=1}^n {\bf{1}} (y_\ell =j)$ we have 
$\tilde{F}_n(q, i, j) = \frac{1}{n_j}  \sum_{\ell=1}^n {\bf{1}} (S(X_\ell, i) \le q) {\bf{1}} (y_\ell =j).$ We  compare this to 
$\P ( S(X,i) \le q| \tilde{Y} = j) = \frac{\P ( S(X,i) \le q; \tilde{Y} = j) }{ \P  ( \tilde{Y} = j)}
.$ For ${\bf y}= (y_1, \ldots, y_n)$ let $A_j = A_j ({\bf y}) = \{ \ell: y_\ell = j\}$.
 Then as long as $A_j \ne \emptyset$, $$\tilde{F}_n(q, i, j) = \frac{1}{| A_j| }  \sum_{\ell \in A_j } {\bf{1}} (S(X_\ell, i) \le q) .$$ 

\begin{proof}
   Let $w^{(1)}_i = P_{i,i}^{-1}P_i - \tilde{P}_i$ and $w^{(2)}_{i,j} = P_i P^{-1}_{ji}$.  
   Using 
   \eqref{eq:over_as_f_tilde}, we can write
   \begin{align}
       \hat{g}(q) - g(q) 
       &= \sum_{i=1}^K \sum_{j=1}^K P_i P^{-1}_{ji} d(q, i, j) - \sum_{i=1}^K \tilde{P}_i d(q, i, i) \nonumber \\
       &= \sum_{i=1}^K \big(P_{i,i}^{-1} P_i - \tilde{P}_i\big) d(q,i,i) 
       + \sum_{i=1}^K \sum_{j \ne i} P_i P^{-1}_{ji} d(q,i,j) \nonumber \\
       &= \sum_{i=1}^K w^{(1)}_i \, d(q,i,i) + \sum_{i=1}^K \sum_{j\ne i} w^{(2)}_{ij}\, d(q,i,j).
   \end{align}
   Taking absolute values gives
   \[
   | \hat g(q)-g(q) | \le \sum_{i=1}^K |w^{(1)}_i|\,|d(q,i,i)| + \sum_{i=1}^K\sum_{j\ne i} |w^{(2)}_{ij}|\,|d(q,i,j)|.
   \]
   
To bound 
$d_n(q, i, j) =  \tilde{F}_n(q, i, j) - \tilde{F}(q, i, j)$ 
we use a technical lemma
which employs the Dvoretzky–Kiefer–Wolfowitz (DKW) inequality, in 
the form derived in \cite{massart_dkw}, to control the approximation of the cdf $\tilde{F}$ by the empirical cdf $\tilde{F}_n$. {The proof is in the Appendix}.
\begin{lemma} \label{lemma:dqij_bound}
    For any $i, j \in [K]$, {with $b(n,j)$ as in Theorem \ref{lemma:classification_estimator_bound}} we have 
$ 
 \E \big[ \sup_q |d_n(q, i, j)| \big]  \leq   b(n,j). 
 $ 
\end{lemma}

   Taking expectations of the supremum over all $q$ and using Lemma \ref{lemma:dqij_bound} proves the assertion.
\end{proof}

 \section{Experiments} \label{sec:experiments}

{In our experiments we compare CRCP to standard conformal prediction (CP) and to the APS variant of NACP by \cite{penso2025conformal}; for the latter purpose, our settings have uniform noise. In experiments by \cite{penso2025conformal} 
{in settings with uniform noise},   ACNL  by \cite{sesia2023adaptive} never performs best, compared to NACP and CRCP. 
Also, the theoretical bound for ACNL is not quite as small as the one underpinning CRCP. 
Hence we do not include ACNL in the comparison.} 




\subsection{Synthetic Data} \label{subsec:synthadata} Here we apply 
CRCP to classification on two synthetic datasets with label noise. The first is a Gaussian logistic regression model which we refer to as \texttt{Logistic}; we sample the features from a $p$-dimensional multivariate normal $X_i \sim \mathcal{N}(0, I)$ and then draw the label $Y_i$ from the distribution
$ \P(Y_i = k|X_i=x_i) = \frac{e^{-x_i^T w_k}}{\sum_{j=1}^K e^{-x_i^T w_j}},  \; \; k \in [K]$
where $w_k \in \R^p$ are also independent $\mathcal{N}(0, I)$ random vectors;  
we set $p=10$ and $K=5$. By symmetry, the marginal label  probabilities are uniform.

For the second dataset, which we refer to as \texttt{Hypercube}, we use the $ \texttt{make\_classification}$ function implemented in the scikit-learn python library (\cite{scikit-learn}); clusters of points are generated about the vertices of a 5 dimensional hypercube with side lengths $2$ (one for each of the $K=5$ classes), where each cluster is distributed as a standard $5-$dimensional Gaussian centred at each vertex of the hypercube. The task is to predict which cluster each point belongs to based on its coordinates. Each feature vector is made up of $5$ informative features for each class, namely its coordinates, and we add 5 noise features to each vector $X_i$, for a total of $p=10$ features.  Again, by symmetry, the marginal label probabilities are uniform. 

For each dataset 
we sample 10,000 datapoints from the model for each of training, calibration and testing, and apply the uniform label noise model like the one in Example 
\ref{ex:uniform}.
The marginal label probabilities are  $P_i = \frac{1}{K}$ for all $i \in [K]$, and 
$P_{ji} = \P (Y=j| \tilde{Y} =i) 
= \frac{\varepsilon}{K} + {\bf 1}(i=j)  (1- \varepsilon) $
as in \eqref{eq:classmodel}. We then use Bayes' rule to find $\hat{P}_i$.
Here we take the noise parameter $\varepsilon = 0.2$ and apply this contamination model to the training and calibration data. 
We fit a classifier on the training data, then use the fitted classifier to calibrate the conformal quantile for $\alpha=0.1$ using the Adaptive Prediction Set (APS) (\cite{aps}) score function with
{CP, CRCP, and NACP}. 
Finally, we construct prediction sets for the test data (which does not contain corrupted labels), and record the empirical coverage and average prediction set size for each of the two methods. {Here, as shown in Example \ref{ex:unif2}, the bound $C(n,\varepsilon)$ in \eqref{thmbound} decreases monotonically when $\varepsilon$ decreases, and hence can be seen as a worst-case bound if it is known that the noise level in the data does not exceed $\varepsilon$.} More details of the experimental setup are found in Appendix \ref{app:expdetails}.

We perform each experiment for four  classification models:  logistic regression (\texttt{LR}), gradient boosted trees (\texttt{GBT}), random forest (\texttt{RF}) and a multi-layer neural network (\texttt{MLP}). We repeat   each experiment 25 times;  Table \ref{tab:by_model} shows the mean and standard deviation of these quantities across repetitions. We see that CRCP 
consistently produced prediction sets with coverage close to the desired level of $90\%$, whereas standard conformal prediction (CP) {and NACP} grossly over-covered the clean labels.
Moreover CRCP
gave prediction intervals which are narrower, and hence more precise, than the ones obtained via CP {and NACP}.

\begin{table}[t]
\scriptsize
    \centering
\begin{tabular}{llllllll}
\toprule
  & \multicolumn{2}{r}{CP} & \multicolumn{2}{r}{CRCP} & \multicolumn{2}{r}{NACP} \\
Dataset & Model  & Coverage & Size & Coverage & Size & Coverage & Size \\
\midrule
\multirow[t]{4}{*}{Logistic} & GBT &  0.972 ±  0.006 &  3.051 ±  0.104 &  {\bf 0.919 ±  0.005} &  {\bf 2.246 ±  0.182} &  0.987 ±  0.004 &  3.622 ±  0.124 \\
 & LR &  0.981 ±  0.005 &  2.882 ±  0.099 &  {\bf 0.919 ±  0.005} &  {\bf 2.001 ±  0.153} &  0.981 ±  0.006 &  2.909 ±  0.158 \\
 & MLP &  0.977 ±  0.006 &  2.982 ±  0.101 &  {\bf 0.921 ±  0.003} &  {\bf 2.072 ±  0.191} &  0.985 ±  0.005 &  3.342 ±  0.159 \\
 & RF &  0.968 ±  0.006 &  3.158 ±  0.099 &  {\bf 0.920 ±  0.005}  &  {\bf 2.345 ±  0.181} &  0.940 ±  0.009 &  2.595 ±  0.218 \\
\cline{1-8}
\multirow[t]{4}{*}{Hyper-} & GBT &  0.982 ±  0.003 &  2.838 ±  0.062 &  {\bf 0.915 ±  0.006} &  {\bf 1.731 ±  0.083} &  0.990 ±  0.003 &  3.297 ±  0.132 \\
 {cube} & LR &  0.951 ±  0.008 &  3.496 ±  0.149 &  {\bf 0.917 ±  0.004} &  {\bf 3.050 ±  0.245} &  0.955 ±  0.006 &  3.567 ±  0.260 \\
 & MLP &  0.989 ±  0.002 &  2.711 ±  0.055 &  {\bf 0.915 ±  0.004} &  {\bf 1.492 ±  0.064} &  0.993 ±  0.002 &  3.121 ±  0.157 \\
 & RF &  0.983 ±  0.003 &  2.836 ±  0.072 &  {\bf 0.915 ±  0.006} &  {\bf 1.677 ±  0.092} &  0.965 ±  0.007 &  2.287 ±  0.132 \\
\cline{1-8}
\end{tabular}
        \caption{Coverage and {size} of prediction intervals, $\pm$1 standard deviation, in the classification experiment aiming for 90\% coverage, for {CP}, 
        CRCP, and {for NACP with the APS version}. Bold: closest to 90\% coverage, and smallest in size. 
        }
     \label{tab:by_model}
     \vspace{-0.75cm}
\end{table}

To 
{assess} the dependence of CRCP 
on the contamination strength we perform on ablation study on the noise parameter $\varepsilon$. {Although the standard 
model \eqref{model} assumes 
$\epsilon \le 1/2$,  our theoretical guarantees do not require this assumption; here we also include larger values of $\epsilon$.} 
Using the \texttt{Logistic} dataset with the logistic regression \texttt{LR} model, we re-run the experiment but vary the parameter $\varepsilon$, {Table \ref{table:epsresults} shows the results. For zero noise, CRCP and CP co-incide (up to random variation) and give the desired coverage, while NACP severely over-covers the clean labels. 
CRCP has closest to desired coverage and smallest prediction interval size in the range $[0, 0.4]$. Only when the noise ratio is at least 50\% does NACP perform better than CP and CRCP.}

\begin{table}[h]
    \centering
    \scriptsize
\begin{tabular}{ccccccc}
\toprule
\multirow{2.5}{*}{$\epsilon$} & \multicolumn{2}{c}{CP} & \multicolumn{2}{c}{CRCP} & \multicolumn{2}{c}{NACP} \\ \cmidrule(lr){2-3} \cmidrule(lr){4-5} \cmidrule(lr){6-7} & Coverage & Size & Coverage & Size & Coverage & Size \\
\midrule
0.00 & \textbf{0.901±0.005} & \textbf{1.787±0.165} & \textbf{0.901±0.006} & \textbf{1.786±0.167} & 0.992±0.005 & 3.135±0.121 \\
0.01 & 0.907±0.005 & 1.829±0.167 & \textbf{0.902±0.006} & \textbf{1.801±0.165} & 0.993±0.005 & 3.261±0.161 \\
0.03 & {0.917±0.005} & 1.911±0.167 & \textbf{0.903±0.006} & \textbf{1.823±0.161} & 0.994±0.005 & 3.362±0.180 \\
0.05 & 0.927±0.006 & 1.998±0.160 & \textbf{0.905±0.006} & \textbf{1.845±0.156} & 0.993±0.006 & 3.352±0.200 \\
0.10 & 0.950±0.005 & 2.256±0.160 & \textbf{0.910±0.006} & \textbf{1.904±0.157} & 0.991±0.006 & 3.232±0.142 \\
0.15 & 0.969±0.005 & 2.559±0.138 & \textbf{0.914±0.005} & \textbf{1.952±0.156} & 0.987±0.006 & 3.073±0.138 \\
0.20 & 0.981±0.005 & 2.882±0.099 & \textbf{0.919±0.005} & \textbf{2.001±0.153} & 0.981±0.006 & 2.909±0.158 \\
0.25 & 0.988±0.004 & 3.187±0.077 & \textbf{0.926±0.005} & \textbf{2.066±0.169} & 0.976±0.006 & 2.785±0.160 \\
0.30 & 0.991±0.004 & 3.451±0.050 & \textbf{0.934±0.007} & \textbf{2.156±0.181} & 0.972±0.005 & 2.707±0.182 \\
0.35 & 0.993±0.003 & 3.653±0.044 & \textbf{0.941±0.007} & \textbf{2.250±0.197} & 0.966±0.006 & 2.619±0.212 \\
0.40 & 0.994±0.002 & 3.807±0.039 & \textbf{0.950±0.008} & \textbf{2.376±0.223} & 0.961±0.007 & 2.552±0.226 \\
0.50 & 0.995±0.002 & 4.023±0.037 & 0.969±0.009 & 2.806±0.302 & \textbf{0.953±0.009} & \textbf{2.488±0.236} \\
0.60 & 0.996±0.002 & 4.182±0.024 & 0.995±0.005 & 4.133±0.498 & \textbf{0.949±0.011} & \textbf{2.489±0.261} \\
0.80 & 0.995±0.002 & 4.385±0.019 & 0.995±0.002 & 4.386±0.019 & \textbf{0.945±0.022} & \textbf{2.706±0.368} \\
\bottomrule
\end{tabular}
    \caption{The coverage and width for different $\epsilon$ for the logistic regression including NACP-APS. Bold text indicates the coverage closest to the desired 90\% and the smallest size. 
    }\label{table:epsresults}
    \label{tab:placeholder}
\vspace{-0.5cm}
\end{table}



\subsection{Real Data with Label Noise}  \label{subsec:realdata} 
 
Here we illustrate CRCP on the contaminated CIFAR-10 dataset known as CIFAR-10N{\footnote{Publically available to download at \url{http://noisylabels.com/}.} introduced in \cite{wei2022learning}. For this dataset, 3 independent workers were asked to assign labels to CIFAR-10 datasets collected from Amazon Mechanical Turk; for details see \cite{wei2022learning}. The CIFAR-10 dataset contains 60,000 colour images in 10 classes (such as airplane, automobile, bird), with 6,000 images per class. There are 50,000 training images and 10,000 test images. 
There are six sets of labels provided for each training image representing different noise patterns:
\begin{itemize}[noitemsep,topsep=3pt,leftmargin=*]
    \item {\tt Clean}: This is the CIFAR-10\gr{H} dataset \gr{from  \cite{peterson2019human} which is assumed to have} 
    a noise rate of 0 \%. 
    \item {\tt Aggr}: In this dataset the label is assigned by majority voting, and picked randomly from the three submitted labels when there is no majority; the noise rate is 9.03\%. 
    \item {\tt R1}: The assigned noisy label is the first submitted label for each image. 
These labels have  a noise rate of 17.23\%. 
 \item {\tt R2}: The   assigned noisy label is the second submitted label for each image. 
These labels have a noise rate of 18.12\%. 
 \item {\tt R3}: The   assigned noisy label is the third submitted label for each image. 
These labels have  a noise rate of 17.64\%. 
    \item {\tt Worst}: If there are any wrongly annotated labels then the worst label is randomly selected from the wrong labels. These labels have a noise rate of 40.21\%.
\end{itemize}
{
{We} assume that CIFAR-10H is the ground truth, so that the empirical distributions of the clean labels are available, and the matrix $P$ can be estimated.}  For each dataset, {in the notation from Section \ref{sec:classification_overcoverage},}  we \gr{use the estimates for} 
$P_i$  and $\tilde{P}_i$, for $i=1, \ldots, K$, as well as the matrix $P$ obtained in \cite{wei2022learning}.
We split the 50,000 images that have noisy, human-annotated labels into 20,000 images for training, 10,000 for validation, and 20,000 for calibration. For each of the six 
label noise settings, we fine-tune a pre-trained ResNet-18 (\cite{he2016deep}) model using the training and validation sets (see Appendix \ref{app:expdetails} for details).
For each repetition we subsample 10,000 calibration data from the remaining 20,000 images with human-annotated labels to calibrate the conformal prediction procedures, and subsample 5,000 test data from the provided set of 10,000 images which were not humanly annotated to evaluate the constructed prediction sets. 
We perform 25 repetitions of this experiment with different seeds {and at a desired coverage of 90\%; we} 
display the results in Table \ref{tab:cifar10n}. 
{F}or the clean data both CP and CRCP achieve within two standard deviations of the desired coverage. 
{W}ith increasing noise ratio,  CP {and NACP} obtain considerable over-coverage, while CRCP stays within two standard deviations of the desired noise level even for  
dataset with the highest noise level. 
Moreover the prediction intervals obtained by CRCP are considerably narrower, and hence more precise, than those obtained by CP {and NACP}. 

\begin{table}[t]
\centering
\scriptsize
\begin{tabular}{ccccccc}
\toprule
\multirow{2.5}{*}{Noise Type} & \multicolumn{2}{c}{CP} & \multicolumn{2}{c}{CRCP} & \multicolumn{2}{c}{NACP} \\ 
\cmidrule(lr){2-3} \cmidrule(lr){4-5} \cmidrule(lr){6-7}
{} &         Coverage &             Size &   Coverage &             Size &         Coverage &             Size \\
\midrule
\texttt{Clean}      &   \textbf{0.900 ±  0.005} &   \textbf{1.507 ±  0.019} &   0.909 ±  0.005 &   \textbf{1.507 ±  0.019} & 0.999 ± 0.004	& 7.790 ± 0.574\\
\texttt{Aggr}       &   0.940 ±  0.003 &   2.003 ±  0.027 &   \textbf{0.899 ±  0.005} &   \textbf{1.550 ±  0.019} & 0.998 ± 0.001 & 6.345 ± 0.152\\
\texttt{R1}         &   0.973 ±  0.002 &   2.997 ±  0.053 &   \textbf{0.902 ±  0.005} &   \textbf{1.672 ±  0.022} & 0.996 ± 0.001 & 5.463 ± 0.158 \\
\texttt{R2}         &   0.977 ±  0.002 &   3.177 ±  0.066 &   \textbf{0.903 ±  0.006} &   \textbf{1.658 ±  0.021} & 0.998 ± 0.001 &	8.730 ± 0.062\\
\texttt{R3}         &   0.973 ±  0.002 &   3.042 ±  0.079 &   \textbf{0.898 ±  0.006} &   \textbf{1.636 ±  0.027} & 0.992 ± 0.001	& 4.997 ± 0.151\\
\texttt{Worst}      &   0.990 ±  0.001 &   5.473 ±  0.078 &   \textbf{0.917 ±  0.009} &   \textbf{2.189 ±  0.093} & 0.985 ± 0.001 &	5.393 ± 0.131\\
\bottomrule
\end{tabular}
\caption{Coverage and size of prediction intervals, $\pm$1 standard deviation, 
aiming for 90\% coverage, 
on the CIFAR-10N dataset. CRCP gives prediction intervals which are very close to the desired 90\% coverage across all noise levels, and does not over-cover much even in the noiseless case, while CP {and NACP} intervals show considerable over-coverage on all but the noiseless data. Moreover the CRCP intervals are narrower than the CP {and the NACP} intervals. 
}
\label{tab:cifar10n}
\vspace{-0.5cm}
\end{table}


{Additional experiments in \cite{penso2025conformal} 
show that 
CRCP performs better 
NACP 
when the number of classes is small, while NACP performs better when the number of classes is large.} 

\section{Discussion}
\label{sec:discussion}

{This paper improves on the literature regarding conformal prediction under noisy labels by providing improved theoretical results and a new method, CRCP, for adjusting the conformal prediction algorithm under label noise. In contrast to \cite{barber2023conformal}, \cite{sesia2023adaptive}, and \cite{penso2025conformal}, the theoretical focus is on bounds on the correction factor which tend to 0 when the contamination tends to 0.}

{In related work,  instead of assuming a particular contamination model, \cite{bashari2025robust} address  the related classification task of outlier detection.}} \cite{einbinder2022conformal} study the robustness of split conformal prediction in a related setting where it is assumed that the entire calibration set is observed under \textit{label noise}. In the regression setting, this means noisy observations $S_i^{noisy}$ of the true scores $S_i$ are observed as 
$ S^{noisy}_i = S_i + Z $ 
where $Z$ is a random variable. In contrast, our models as detailed in Section \ref{sec:cp_mixture} consider the classical Huber contamination model in which some data are clean and 
only a fraction of the data are corrupted. As only continuously observed label noise is considered in \cite{einbinder2022conformal} they conclude that for regression, in all but pathological cases conformal prediction continues to provide coverage. They provide results similar to those introduced in Subsection \ref{sec:over_under} {of Section \ref{sec:cp_mixture}}, although they only consider the case where conformal prediction still provides (conservative) coverage and do not provide estimates for the coverage probabilities or efficiency. 

{Wider related works} have addressed conformal prediction under different types of distribution shift including covariate shift (\cite{tibshirani2019conformal}) and various different online or adversarial settings (\cite{gibbs2021adaptive, zaffran2022adaptive, faci, bastanipractical}). Our outlier setting can be seen as a specific case of distribution shift where the quantile is calibrated over a different distribution to the test data, and hence methodology from this literature could be used to address data corruption if 
it is known which of the observations are corrupted (an assumption which we do not make in this paper).
A related approach is that of \cite{cauchois2024robust}, who study the distribution shift setting; here they propose to first estimate the magnitude of distribution shift between calibration and test data before using this estimate to adjust a conformal prediction procedure. While we assume that data corruptions are sampled from some unknown but fixed distribution, \cite{gendler2022adversarially} consider the setting where the data is perturbed adversarially, and apply a randomized-smoothing approach to estimate an adjustment to recover coverage guarantees.  
{In \cite{angelopoulos2023conformal}, conformal prediction is extended to control the expectation of monotone loss functions, with one of the examples being transductive learning with a special type of distributional shift, namely that the conditional distribution of $Y$ given $X$ remains the same in training and test domain. In this case the distributional shift reduces to a covariate shift, which can be tackled using a weighted procedure.} 

{There are also related papers on group conditional coverage: if the group membership of each sample is known, this problem can be tackled by configuring different thresholds for the different groups (\cite{jung2022batch, gibbs2023conformal}). In our setting, there would be two groups: clean or contaminated. However, we assume that the group membership is not known.}


{Finally we highlight some  directions for future work.} 
This paper has illustrated that conformal prediction may be misleading when the data follow a Huber-type contamination model. In the classification setting we were able to offer a remedy under additional assumptions, via Theorem \ref{lemma:classification_estimator_bound}. The proof relies heavily on the discrete nature of the problem. 
{For regression under continuous label noise, with additional assumptions  \cite{cohen2025efficient} use a  deconvolution argument to create an adaptive strategy for conformal prediction.}
{S}imilar remedies may be possible in other settings, such as that of functional regression. Exploring these settings will be part of future work.
Moreover, the assumption of independent observations is perhaps too strong. 
\cite{barber2023conformal} extended conformal prediction to the non-exchangeable setting using a data re-weighting approach. {It may  be possible to use a similar re-weighting to extend CRCP to non-exchangeable observations.}



\vskip6pt

{\bf Acknowledgements. }{The authors would like to thank Aleksandar Bojchevski, Tom Rainforth, Aaditya Ramdas  and Matteo Sesia for helpful discussions, and {Daniel Reinert for computing support}. G.R. acknowledges support from EPSRC grants EP/T018445/1, EP/W037211/1, EP/V056883/1, and EP/R018472/1.
W.X. acknowledges support from EPSRC grant EP/T018445/1;  he is also supported by the DFG (German Research Foundation) 
– EXC number 2064/1 – Project number 390727645.}


\bibliographystyle{abbrv} 
\bibliography{main}
\clearpage
\appendix

\section*{Appendix}

\subsection{Further proofs}

\begin{proof}[{\bf Proof of Lemma \ref{lem:rep}}]
{We have
\begin{align*}
    \E_{\tilde \pi}\!\big[ w_q(\tilde T)\,h(\tilde T) \big]
  &= (1 - \Ftilde(q)) \int_{-\infty}^q (\pi_2(x) - \pi_1(x) ) \, dx - \Ftilde(q)) \int_{q}^\infty(\pi_2(x) - \pi_1(x) ) \, dx\\
  &= (1 - \Ftilde(q)) \int_{-\infty}^q (\pi_2(x) - \pi_1(x) ) \, dx +  \Ftilde(q)) \int_{-\infty}^q (\pi_2(x) - \pi_1(x) ) \, dx\\
  &= F_2(x) - F_1(x), 
\end{align*}
proving the first claim. 
Here we used  that $\int_{-\infty}^q \pi_i (s) \, dx = 1 - \int_q^\infty \pi_i (s) \, dx$.  
}
The second claim follows by a direct calculation of the variance of $w_q$. 
\end{proof}
To prove Lemma \ref{lemma:dqij_bound}, first we prove a  lemma.
\begin{lemma}\label{lem:binbound}For any $0\le p \le 1,$
$\sum_{k=1}^n {n \choose k} \frac{1}{\sqrt{k}} p^k (1-p)^{n-k} \le \sqrt{ {2}/({np})}. $
\end{lemma}

\begin{proof}
Let $B\sim Bin (n-1, p)$. We observe that 
\begin{align*}
\sum_{k=1}^n {n \choose k} \frac{1}{\sqrt{k}} p^k (1-p)^{n-k} 
    & = n p \sum_{\ell=0}^{n-1}{{n-1} \choose \ell} \frac{1}{{(\ell+1)^{3/2}}}  p^\ell (1-p)^{n-1-\ell}= n p \, \E [(1+B)^{-3/2}]. 
\end{align*}
Next we prove that
$
\E [(1+B)^{-\frac32}] \leq \sqrt{2}(n p)^{-\frac{3}{2}} .$ 
By the Cauchy-Schwarz inequality,  
\begin{align*}
    \E [(1+B)^{-\frac32}] = \E \Big[ \frac{1}{B+1} \sqrt{\frac{1}{B+1}} \Big] \leq \sqrt{\E \Big[ \frac{1}{(B+1)^2} \Big] \E \left[ \frac{1}{B+1} \right].}
\end{align*} 
To bound $\E [(1+B)^{-2}]$, we use 
the notion of \textit{size bias} distributions; a random variable $X^s$ has the size bias distribution of a nonnegative rv $X$ if and only if 
$ \E X f(X) = \E X \, \E f(X^s) $
for all measureable functions $f$.
One can show that if  $X_n \sim Bin (n, p), $ then $1 + X_{n-1}$ has the $X_n$-size bias distribution (see, for example \cite{arratia2019size}). So $1+B$ has the $X_n$-size bias distribution with $p=\tilde{P}_j$. Moreover, $1/( 1 + x)^2 \le 2/( 2 + x)^2$ and 
\begin{align}
    \E (1 + B)^{-2} \le 
 2 \, \E (2 + B)^{-2} &=  \frac{2}{np} \E \frac{X_n}{(1 + X_n)^2} 
<    \frac{2}{np} \E \left\{ \frac{1}{(1 + X_n)} \right\}  
=   
\frac{2}{np} \frac{1}{(n+1)p} \P (X_n \ge 1)  \nonumber
\end{align} 
where we used size biasing twice, once with $f(x) = \frac{1}{1+x}$ and once with $f(x) = \frac{1}{x} 1 (x \ge 1).$ Using %
that 
   $\E \left[ \frac{1}{B+1} \right] \leq \frac{1}{np}$ 
 (see \cite{storey_fdr}, Lemma 3),
gives 
$ \E [(1+B)^{-\frac32}] \leq \sqrt{2}(n p)^{-\frac{3}{2}}.$ 
\end{proof}

\begin{proof}[Proof of Lemma \ref{lemma:dqij_bound}]
We have  \begin{eqnarray} 
 d_n(q, i, j) 
 &=& \sum_{A} {\mathbf{1}} ( A_j = A) \Big\{ \frac{1}{|A|} \sum_{\ell \in A} {\bf{1}} (S(X_\ell, i) \le q )
  - \P (S(X, i) \le q | \tilde{Y}=j)
 \Big\} \nonumber \\
 &=& 
 \sum_{A \ne \emptyset} {\mathbf{1}}( A_j = A) \Big\{ \frac{1}{|A|} \sum_{\ell \in A} {\bf{1}} (S(X_\ell, i) \leq q )
  - \P (S(X, i) \leq q | \tilde{Y}=j)
 \Big\} \nonumber \\
 && \qquad - {\bf 1} (A_j = \emptyset) \P (S(X, i) \leq q | \tilde{Y}=j) \label{secondbound} 
 . 
\end{eqnarray} 
Taking expectations and supremum,
\begin{eqnarray} 
 \lefteqn{\E \sup_{q} | d_n(q, i, j) | }\label{eq_d_expanded}  \\ 
 &\leq& \sum_{A \ne \emptyset}\P( A_j = A)  \E \Big[ \sup_{q} \Big| \frac{1}{|A|} \sum_{\ell \in A} {\bf{1}} (S(X_\ell, i) \leq q )
  - \P (S(X, i) \leq q |
  \mid \tilde{Y}=j) \Big| \, \, \mid A_j = A
 \Big]  \nonumber \\
 && + \P (A_j = \emptyset) \nonumber.  
\end{eqnarray} 
Now if $A_j=A$ then $\ell \in A $ if and only if $\tilde{Y}_\ell =j$. Hence using the independence of the observations,
${\bf 1}(\ell \in A) \P (S(X_\ell, i) \leq q | A_j=A) 
=\P (S(X_\ell, i) \leq q \mid \tilde{Y}_\ell = j).$
 {In general, for   i.i.d.\, random variables $X_1, \ldots, X_n$ with cdf $F$ and   empirical cdf $F_n$, 
    \begin{align}
\label{lem:DKWbound}
  \E [ \sup_x | F_n(x) - F(x)|] \le \sqrt{{\pi}/({2 n})}. 
    \end{align}
    To see this, the refined DKW inequality by \cite{massart_dkw} yields that 
 \begin{align*} 
\E \left[  \sup_q  | F_n(q) - F(q) 
| \right] 
 &= \int_0^\infty \P \left(\sup_q | F_n(q) - F(q) | > \delta  \right) d \delta \nonumber  \leq 2 \int_0^\infty e^{-2|n|\delta^2} d\delta 
  =  \sqrt{\frac{\pi}{2|n|}} .
\end{align*}
 Next we apply \eqref{lem:DKWbound}} to obtain that for any fixed set $A \ne \emptyset$,
 \begin{align} 
    & \E \Big[  \sup_q  \Big|  \frac{1}{|A|} \sum_{\ell \in A} {\bf{1}} (S(X_\ell, i) \le q )
  - \P (S(X, i) \le q | \tilde{Y}=j)
 \Big|\, \, \mid A_j = A    \Big ] 
  \leq \sqrt{\frac{\pi}{2|A|}} .\label{eq:dkw_fixed_A_bound}
\end{align}
As $\tilde{Y}_i, i=1, \ldots, n$ are i.i.d., for a set $A$, 
$\P (A_j = A)$ depends only on the  size $|A|$ of $A$, and $| A_j| = |A_j (\tilde{Y}_i, i=1, \ldots, n)| 
= | \{ \ell: \tilde{Y}_\ell =j\} | \sim Bin (n, \tilde{P}_j)$. 
Using  
\eqref{eq:dkw_fixed_A_bound} in  \eqref{eq_d_expanded} and unconditioning gives 
\begin{eqnarray*} 
\E  \sup_q  |  d_n(q, i, j) |  &\leq& 
 \sum_{A \ne \emptyset} \sqrt{\frac{\pi}{2}} \P (A_j= A) \frac{1}{\sqrt{|A|}}  
+  \P (A_j = \emptyset) \nonumber \\ 
&= & \sqrt{\frac{\pi}{2}}\sum_{k=1}^n {n \choose k} \frac{1}{\sqrt{k}} \tilde{P}_j^{k} ( 1 - \tilde{P}_j)^{n-k} +  \P (A_j = \emptyset) \nonumber  \\
&= &\sqrt{\frac{\pi}{2}}
n\tilde{P}_j \sum_{\ell=0}^{n-1} {{n-1} \choose \ell} \frac{1}{{(\ell+1)^{3/2}}} \tilde{P}_j^{\ell} ( 1 - \tilde{P}_j)^{n-1-\ell} +  \P (A_j = \emptyset) \nonumber \\
& \leq &  
  \sqrt{\frac{\pi}{n\tilde{P}_j}}
 + ( 1 - \tilde{P}_j)^n 
\end{eqnarray*}
where {we used Lemma \ref{lem:binbound}
as well as 
  $ \P (A_j = \emptyset) = ( 1 - \tilde{P}_j)^n $
in the last step.}
\end{proof}

\subsection{Experimental Details for Section \ref{sec:experiments}} \label{app:expdetails}
For both the synthetic and CIFAR-10N experiments we used the implementation of the APS scoring function provided by the authors of 
\cite{aps}. The experiments were 
run on a single machine with an AMD Ryzen 7 3700X 8-Core Processor and an NVIDIA GeForce RTX 2060 SUPER GPU. The total running time to reproduce all the synthetic experiments in the paper is around 8 hours, and for the CIFAR-10N it is about 2 hours in total on this hardware.

For the synthetic data experiments, 
the classifiers were implemented using the scikit-learn \cite{scikit-learn} library and were trained using their default hyper-parameters. 
For the CIFAR-10N experiments, we used the ResNet-18 model available in the torchvision library 
which is pretrained on the ImageNet database (
available at \url{https://pytorch.org/vision/main/models/generated/torchvision.models.resnet18.html}), and replaced the final layer to match the number of classes, which is 10 in our case. We initialise the weights to those of a ResNet-18 pre-trained on the Imagenet dataset (also available in the torchvision library) and trained each model for 30 epochs using the Adam \cite{adam} optimiser, using a batch size of $128$ and an initial learning rate of $1e^{-3}$, which is decayed by a factor of $10$ any time the training loss does not decrease for 3 epochs. We evaluate the validation loss at every epoch, and pick the model with the lowest validation loss overall. 

\end{document}